\newcommand{\Mod}[1]{\ (\mathrm{mod}\ #1)}
\newtheorem{definition}{Definition}
\newtheorem{theorem}{Theorem}[section]
\newtheorem{corollary}{Corollary}[theorem]
\newtheorem{lemma}[theorem]{Lemma}
\begin{document}

\title{On the Universality of Memcomputing Machines}

%\author{Yan Ru Pei}
%\email{email: yrpei@ucsd.edu}
%\affiliation{Department of Physics, University of California, San Diego, La Jolla, CA 92093}

\author{Yan Ru Pei, Fabio L. Traversa, and Massimiliano Di Ventra\thanks{YRP and MD are with the Department of Physics, University of California-San Diego, 9500  Gilman Drive, La Jolla, California 92093-0319, USA, (e-mail: yrpei@ucsd.edu, diventra@physics.ucsd.edu). FLT is with MemComputing Inc., San Diego, CA 92130 USA (e-mail: ftraversa@memcpu.com). }}
%\email{email: diventra@physics.ucsd.edu}
%\affiliation{Department of Physics, University of California, San Diego, La Jolla, CA 92093}

\maketitle
%\date{\today}

\begin{abstract}
Universal memcomputing machines (UMMs) [IEEE Trans. Neural Netw. Learn. Syst. {\bf 26}, 2702 (2015)] represent a novel computational model in which memory (time non-locality) accomplishes both tasks of storing and processing of information. UMMs have been shown to be Turing-complete, namely they can simulate any Turing machine. In this paper, we first introduce a novel set-theory approach to compare different computational models, and use it to recover the previous results on Turing completeness of UMMs. We then relate UMMs directly to liquid-state machines 
(or ``reservoir computing'') and quantum machines (``quantum computing''). 
We show that UMMs can simulate both types of machines, hence they are both ``liquid-'' or ``reservoir-complete'' and ``quantum-complete''. Of course, these 
statements pertain only to the type of problems these machines can solve, and not to the amount of resources required for such simulations. Nonetheless, the 
set-theoretic method presented here provides a general framework in which to describe the relation between any computational models. 

\end{abstract}

\begin{IEEEkeywords}
	\textbf{memory, elements with memory, Memcomputing, Turing Machine, Reservoir Computing, Quantum Computing.}
\end{IEEEkeywords}

\section{Introduction}

{\it Memcomputing} stands for ``computing {\it in} and {\it with} memory'' \cite{13_memcomputing}. It is a novel computing paradigm whereby memory (time non-locality) is employed to perform both the storage and the processing of information on the same physical location. This paradigm is substantially different than the one implemented in our modern-day computers \cite{complexity_2}. In our present computers, there is a separation of tasks between a memory (storage) unit and an information processor. Modern-day computers are the closest physical approximation possible to the well-known (mathematical) Turing paradigm of computation that maps a finite string of symbols into a finite string of symbols in discrete time \cite{complexity}.

The memcomputing paradigm has been formalized by Traversa and Di Ventra in \cite{umm}. In that paper, it was shown that universal memcomputing machines (UMMs) can be defined as digital (so-called {\it digital memcomputing machines} (DMMs) \cite{dmm2}) or analog \cite{traversaNP}, with the digital ones offering an easy path to scalable machines (in terms of size) \cite{leverage}. 

UMMs have several features that make them a powerful 
computing model, most notably {\it intrinsic parallelism}, {\it information overhead}, and {\it functional polymorphism}~\cite{umm}. The first feature means that they can operate {\it collectively} on all 
(or portions of) the data at once, in a cooperative fashion. This is reminiscent of neural networks, and indeed neural networks can be viewed as special cases of UMMs. However, neural networks do not have ``information overhead''. This feature is related to the {\it topology} (or architecture) of the network of memory units ({\it memprocessors}). It means the machine has access, at any given time, to more information (precisely originating from the topology of the network) than what is available if the memprocessors were not connected to each other. Of course, this information overhead is not necessarily stored by the machine (the {\it stored} information is the Shannon one)~\cite{umm}. 
Nevertheless, with appropriate topologies, hence with appropriate information overhead, UMMs can solve complex problems very efficiently~\cite{umm}. Finally, functional polymorphism means that the machine is able to compute different functions by simply applying the appropriate input signals~\cite{umm}, without modifying the topology of the machine network.

In Ref.~\cite{umm} it was shown that UMMs are Turing-{\it complete}, meaning that UMMs can simulate {\it any} Turing machine. Note that Turing-completeness means that all problems a Turing machine can solve, can also be solved by a UMM. However, it does {\it not} imply anything about the resources (in terms of time, space and energy) required to solve those 
problems. 

The reverse, namely that all problems solvable by UMMs are also solvable by Turing machines has not been proven yet. Nevertheless, a practical realization of {\it digital} memcomputing machines, using electronic circuits with memory~\cite{dmm2}, has been shown to be efficiently simulated with our modern-day computers (see, e.g.,~\cite{max-sat}). In other words, the ordinary differential equations representing DMMs and the problems they are meant to solve can be efficiently simulated on a classical computer. 

In recent years, other computational models, each addressing different types of problems, have attracted considerable interest in the scientific community. On the one hand, {\it quantum computing}, namely computing using features, like tunneling or entanglement, that pertain only to quantum systems, has become a leading candidate to solving specific problems, such as prime factorization~\cite{shor}, annealing \cite{anneal}, or even simulations of quantum Hamiltonians~\cite{q_hamiltonian}. This paradigm 
has matured from a simple proposal~\cite{benioff_1980} to full-fledged devices for possible commercial use~\cite{dwave_1}. However, its scalability faces considerable practical challenges, and its range of applicability is very limited compared to even classical Turing machines. 

Another type of computing model pertains to a seemingly different domain, the one of spiking (recurrent) neural networks, in which spatio-temporal patterns of the network can be used to, e.g., learn features after some training~\cite{liquid_1}. Although somewhat different realizations of this type of networks have been suggested, for the purposes of this paper, here we will focus  only on the general concept of ``reservoir-computing''~\cite{liquid_3}, and in particular on its ``liquid-state machine'' (LSM) realization~\cite{liquid_4}, rather than the ``echo-state network'' one~\cite{liquid_2}. The results presented in this paper carry over to this other type of realization as well. Therefore, we will use the term ``reservoir-computing'' as analogous to ``liquid-state machine'' and will not differentiate among the different realizations of the former. 

Our goal for this paper is to study the relation between these seemingly different computational models directly, without reference to the Turing model. In particular, we will show that UMMs encompass both quantum machines and liquid-state machines, in the sense that, on a theoretical level, they can simulate {\it any} quantum computer or {\it any} liquid-state machine. Again, this does not imply anything about the {\it resources} needed to simulate such machines. In other words, we prove here that UMMs are not only Turing-complete, but also ``liquid-complete'' (or ``reservoir-complete'') and ``quantum-complete".

In order to prove these statements, we will introduce a general \textit{set-theoretical} approach to show that the LSM and quantum computers can be mapped to subsets of the UMM. This set-theoretical approach is however very general and it is applicable to any other type of computational model, other than the ones we consider in this work.

Our paper is organized as follows. In Sec.~\ref{definition} we briefly review the definitions of the machines we consider in this work, starting with UMMs. In Sec.~\ref{mathtools} we introduce the basic mathematical ingredients that are needed to prove the completeness statements that we will need later. In Sec.~\ref{setcomp} we define a general computing model within a set-theoretical framework, and construct the equivalence and completeness relations under this framework. In Sec.~\ref{SET-rewrite}, we will re-write the definitions of UMMs, quantum computers and liquid-state machines using the results from previous sections. This allows us to show explicitly in Sec.~\ref{universality} that UMMs are not only Turing-complete, but also quantum-complete and liquid-state complete. In Sec.~\ref{discussions}, we make clear what this work implies and what it does not imply. We will also provide references to works where the simulations of the DMM subclass of UMMs have already shown significant performance advantages over other computing models for hard combinatorial optimization problems. In Sec.~\ref{conclusions} we offer our conclusions and thoughts for future work. 

Note that most of the results from Sec.~\ref{setcomp} onward are original results. Most notably, we construct the notion of a \textit{transition function} ``from the ground up" without any dependence on objects such as tape, symbols, or head. This provides the basis for a general methodology for showing equivalence and completeness relations between different computing models.

\section{Review of Machine Definitions}\label{definition}

We first provide a brief review of the definitions of the three machines we will be discussing in this paper, so the reader will have a basis of reference. 

\subsection{Universal Memcomputing Machines}

The UMM is an ideal machine formed by interconnected memory cells (``memcells'' for short or memprocessors). It is set up in such a way that it has the properties we have anticipated of intrinsic parallelism, functional polymorphism, and information overhead \cite{umm}. 

%For the purpose of our discussion, lets only focus on the first two properties. \\

We can define a UMM as the eight-tuple~\cite{umm}:
\begin{equation}\label{UMMdef}
(M, \Delta, P, S, \Sigma, p_0, s_0, F)
\end{equation}
where $M$ is the set of possible states of a single memory cell, and $\Delta$ is the set of all transition functions:
\begin{equation}
\delta_a: M^{m_a}\setminus F \times P\rightarrow M^{m'_a}\times P^2\times A
\end{equation}
where $m_a$ is the number of cells used as input (being read), $F$ is the final state, $P$ is the set of input pointers, $m'_a$ is the number of cells used as output (being written), $P^2$ is the Cartesian product of the set of output pointers and the set of input pointers for the next step, and $A$ is the set of indices $a$. 

Informally, the machine does the following: every transition function has some label $a$, and the chosen transition function directs the machine to what cells to read as inputs. Depending on what is being read, the transition function then writes the output on a new set of cells (not necessarily distinct from the original ones). The transition function also contains information on what cells to read next, and what transition function(s) to use for the next step.

Using this definition, we can better explain what the two properties of intrinsic parallelism and functional polymorphism mean. Unlike a Turing machine, the UMM does not operate on each input cell individually. Instead, it reads the input cells as a whole, and writes to the output cells as a whole. Formally, what this means is that the transition function cannot be written as a Cartesian product of transition functions acting on individual cells, namely $\delta(\prod_i M_i)\neq\prod_i(\delta_i(M_i))$. This is the property of intrinsic parallelism. We will later show that the set of transition functions without intrinsic parallelism is, in fact, a small subset of the set of transition functions with intrinsic parallelism.

Furthermore, the transition function of the UMM is dynamic, meaning that it is possible for the transition function to change after each time step. This is shown as the set $A$ at the output of the transition function, whose elements indicate what transition functions to use for the next time step. This is the property of functional polymorphism, meaning that the machine can admit multiple transition functions. Finally, the topology of the network is encoded in the definition of the transition functions which map a given state of the machine into another. A more in depth discussion of these properties can be found in the original paper~\cite{umm}.

In practice, the transition function is made dynamic by introducing non-linear passive elements such as memristors (memory resistors) and active elements such as  voltage-controlled voltage generators (VCVGs) into the circuits~\cite{dmm2}. Those elements provide ``memory", or hysteresis properties, to the system as well as dynamical control. Memory is an integral characteristic of the UMM computing model, which allows for time dependency of the differential equations governing the evolution of the system, thus a dynamic transition function. 

\subsection{Liquid-State Machines}

Informally, we can think of the LSM as a reservoir of water~\cite{liquid_2}. The process of sending the input signals into the machine is analogous to us dropping stones into the water, and the evolution of the internal states of the machine is the propagation of the water waves. Different waveforms will be excited depending on what stones were being dropped, how and when they were dropped, and the properties of the waveforms will encompass the information of the stones being dropped. Therefore, we can train a function that maps the waveforms to the corresponding output states that we desire.

Formally, we can define the machine using a set of filters and a trained function \cite{liquid_1}. A series of filters defines the evolution of the internal states, and the trained function maps the internal states to some output.

The set of filters must satisfy the point-wise separation property. This is defined as follows:
\begin{definition}
A class $B$ of basis filters has the \textnormal{pointwise separation property} if for any two input functions $u$ and $v$, with $u(s)\neq v(s)$ for some $s\leq t$, there is a basis filter $b\in B$ such that $(b\circ u)(t)\neq (b\circ v)(t)$. 
\end{definition}
This means that we can choose a series of filters such that the evolution of the internal states will be unique to any given signal, at any given time. In other words, this property ensures that different "stones" excite different "waveforms".\\

The trained output function must satisfy the ``fading memory'' property. This is defined as follows:
\begin{definition}
$F:U\rightarrow\mathbb{R}^n$ has \textnormal{fading memory} if for every internal state $u\in U$ and every $\epsilon>0$, there exist $\delta>0$ and $T>0$ so that $|(Fv)(0)-(Fu)(0)|<\epsilon$ for all $v\in U$ with $|u(t)-v(t)|<\delta$ for all $t\in[-T,0]$.
\end{definition}
Intuitively, this means that we do not need to know the evolution of the internal states from the infinite past to determine a unique output. Instead, we only need to know the evolution starting from a finite time $-T$. 

Since the liquid-state machine does not have any circuits that are hard-wired to perform specific tasks, it is particularly suited as a possible computational model for implementing learning algorithms. It is different from the modern neural networks in that it is not required for the reservoir (or hidden-layers) to be trained. The connections between the neurons are initialized randomly, so it is in essence a random projection from the input signals to some high-dimensional space, more specifically a pattern of activations in the network nodes, which is then read out by linear discriminant units. This effectively achieves a non-linear mapping from the inputs to outputs, with very little control over the process, and for this reason, it is not as favored compared to other neural networks. 

\subsection{Quantum Computers}
\label{qc}

There are many ways in which one can define a quantum computer. For simplicity sake, we consider the most general model of quantum computing and ignore its specific operations \cite{QI_bible}.

Consider a quantum computer with $n$ identical qubits, and each qubit can be expressed as a linear combination of $m$ basis states. The choice of the basis states can be arbitrary. However, they have to span the entire Hilbert space of the system. If we look at one single qubit, then every state that it admits can be expressed in terms of a linear combination of the $m$ basis states. (Typically $m$ is chosen equal to 2, but we do not restrict the quantum machine to this 
basis number here.) In other words, $\ket{\psi}=\sum_{i=0}^{m-1}c_i \ket{i}$, where $i$ simply labels the basis state, and $c_i$ is some complex number. Note that we have to impose the normalization condition $\sum_{i=0}^{m-1}|c_i|^2=1$.

Now, let us consider the whole system. In general, the total state can be expressed as $\ket{\psi}=\sum_{i_1=0}^{m-1}\sum_{i_2=0}^{m-1}...\sum_{i_n=0}^{m-1}c_{i_1 i_2...i_n}\ket{i_1 i_2...i_n}$. Here, $i_j$ denotes that the $j$-th qubit is in the $i$-th basis state. A basis state of the total wavefunction can be expressed as the tensor product of the basis states of the individual qubits. Then it is not hard to see that the total state will have a total number of $m^n$ basis states. Each basis state is associated with some complex factor $c_{i_1 i_2...i_n}$ where, again, the normalization condition $\sum_{i_1=0}^{m-1}\sum_{i_2=0}^{m-1}...\sum_{i_n=0}^{m-1}|c_{i_1 i_2...i_n}|^2=1$ has to be imposed.

Any quantum algorithm can be expressed as a series of unitary operations \cite{QI_bible}. Since the product of multiple unitary operations is still a unitary operation, we can then express the total operation after time $t$ with a one-parameter operator $\hat{U}(t)$, so that $\ket{\psi(t)}=\hat{U}(t)\ket{\psi(0)}$. In other words, the state of the quantum system at any point in time can be expressed as some unitary operation on the initial state. Note that $\hat{U}(t)$ can be either continuous or discrete in time. Either way, $\hat{U}(t)$ can be considered as the ``transition function" of the quantum computer.

Finally, we have to make measurements on the system in order to convert the quantum state into some meaningful {\it output} for the observable we are interested in. The process of measurement can be considered as finding the expectation value of some observable $\hat{O}$, so the {\it output function} of a quantum computer can be written as $\braket{\psi(t)|\hat{O}|\psi(t)}=\braket{\psi(0)|\hat{U}(t)^{\dagger}\hat{O} \hat{U}(t)|\psi(0)}$. Of course, to obtain an accurate result of the expectation value, many measurements have to be made. In fact, the initial state has to be prepared multiple times, evolved multiple times, and the 
corresponding expectation value at a given time, must be measured multiple times. A quantum computer is thus an intrinsically probabilistic-type of machine. 

\section{Mathematical Tools}\label{mathtools}

We now introduce the necessary mathematical tools that will allow us to construct a general description of a computing machine using set theory and cardinality arguments. Most of the definitions and theorems in this section, with their detailed proofs can be found in the literature on the subject (see, e.g., the textbook~\cite{cantor_1}). Note also that one does not have to understand the proofs in order to understand the theorems. Neither do the main results of this paper rely on understanding the proofs. Therefore, some of the proofs can be skipped over according to the interest of the reader.

First, we introduce the famous Cantor's Theorem \cite{cantor_2}:

\begin{theorem} 
The power set of the set of natural numbers has the same cardinaltiy of the set of real numbers, or $|\mathbb{R}|=2^{|\mathbb{N}|}$.
\end{theorem}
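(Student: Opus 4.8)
The plan is to establish the cardinal identity $|\mathbb{R}| = 2^{|\mathbb{N}|}$ by exhibiting injections in both directions and then invoking the Cantor--Schr\"oder--Bernstein theorem, the standard tool guaranteeing that mutual injectability of two sets implies a bijection between them. So the task reduces to constructing two injective maps: one from $2^{\mathbb{N}}$ into $\mathbb{R}$, and one from $\mathbb{R}$ into $2^{\mathbb{N}}$ (possibly up to a cardinality-preserving reindexing of the index set).

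For the direction $2^{|\mathbb{N}|} \le |\mathbb{R}|$, I would first identify the power set $2^{\mathbb{N}}$ with the set of binary sequences $(a_n)_{n\in\mathbb{N}} \in \{0,1\}^{\mathbb{N}}$ via indicator functions. The naive idea of reading such a sequence as a binary expansion $0.a_0a_1a_2\cdots$ fails to be injective because of the ambiguity $0.0111\cdots = 0.1000\cdots$ at dyadic rationals. The fix is to map $(a_n)$ instead to the real number $\sum_{n=0}^{\infty} 2a_n\,3^{-(n+1)} \in [0,1]$: since every ternary digit of the image is forced to be $0$ or $2$, two distinct sequences necessarily yield distinct reals (this is exactly the middle-thirds Cantor set construction), so the map is injective, and composing with any bijection $[0,1]\to\mathbb{R}$ is harmless.

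For the direction $|\mathbb{R}| \le 2^{|\mathbb{N}|}$, I would send each real $x$ to its Dedekind cut $\{\, q \in \mathbb{Q} : q < x \,\} \in 2^{\mathbb{Q}}$; density of $\mathbb{Q}$ in $\mathbb{R}$ makes this injective, since distinct reals are separated by some rational. To convert this into an injection into $2^{\mathbb{N}}$ I would use the auxiliary fact $|\mathbb{Q}| = |\mathbb{N}|$ (which itself follows from $|\mathbb{N}\times\mathbb{N}| = |\mathbb{N}|$), giving $|2^{\mathbb{Q}}| = |2^{\mathbb{N}}| = 2^{|\mathbb{N}|}$. With both injections in hand, Cantor--Schr\"oder--Bernstein yields the claimed equality.

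I expect the only genuine subtlety to be the bookkeeping around the non-uniqueness of positional expansions: the base-$3$ trick in the first direction (or, symmetrically, restricting decimal expansions to avoid trailing nines) is precisely what promotes the map from "almost injective" to injective, and getting that detail right is the crux. Everything else---the Dedekind-cut embedding, the countability of $\mathbb{Q}$, and the appeal to Cantor--Schr\"oder--Bernstein---is routine and can be cited from the standard references already mentioned in this section.
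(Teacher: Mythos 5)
Your proposal is correct: the Cantor--Schr\"oder--Bernstein strategy, with the base-$3$ (middle-thirds) embedding of $\{0,1\}^{\mathbb{N}}$ into $[0,1]$ to dodge the dyadic-expansion ambiguity and the Dedekind-cut injection of $\mathbb{R}$ into $2^{\mathbb{Q}}\cong 2^{\mathbb{N}}$, is the standard complete argument, and you correctly flag the expansion non-uniqueness as the one point needing care. The paper itself supplies no proof of this theorem --- it states it and defers to the cited literature --- so there is nothing to compare against; your write-up would in fact fill that gap.
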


Here, ``cardinality" simply means the ``number" of elements in the set, or the ``size" of the set. Furthermore, one can also show the following theorem to be true: 

\begin{theorem}
Any open interval of real numbers has cardinality $|\mathbb{R}|$.
\end{theorem}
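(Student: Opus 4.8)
The plan is to reduce the statement to a single explicit bijection. First I would split open intervals into four shapes: a bounded interval $(a,b)$ with $a<b$, a right half-line $(a,\infty)$, a left half-line $(-\infty,b)$, or the whole line $\mathbb{R}$ (which is trivial). A translation $x\mapsto x-a$ or $x\mapsto b-x$ reduces the two half-lines to the single model half-line $(0,\infty)$, and the affine map $x\mapsto a+(b-a)x$ is a bijection from $(0,1)$ onto $(a,b)$; so it suffices to treat the two model intervals $(0,1)$ and $(0,\infty)$.

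Next I would exhibit elementary bijections from these model intervals onto $\mathbb{R}$. For $(0,1)$, the map $f(x)=\frac{2x-1}{x(1-x)}$ (one could equally use $\tan(\pi x-\tfrac{\pi}{2})$ or $\log\frac{x}{1-x}$) is continuous, has positive derivative on $(0,1)$, and satisfies $\lim_{x\to 0^+}f(x)=-\infty$ and $\lim_{x\to 1^-}f(x)=+\infty$; hence it is injective by strict monotonicity and surjective onto $\mathbb{R}$ by the intermediate value theorem. For $(0,\infty)$, the logarithm $\log:(0,\infty)\to\mathbb{R}$ is a bijection. Composing these with the affine/translation reductions above yields, in every case, an explicit bijection between the given open interval and $\mathbb{R}$, which is exactly the assertion that any open interval has cardinality $|\mathbb{R}|$.

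I expect no real obstacle here; the only things requiring care are bookkeeping — checking that the four-case split exhausts all open intervals and that the reductions compose correctly — and the verification that the chosen non-affine map is genuinely a bijection, where one invokes that a continuous strictly monotone function on an interval is injective and, by the intermediate value theorem together with the computed one-sided limits, maps onto the stated codomain. If one prefers to avoid even this much analysis, an alternative is to note that the inclusion $(a,b)\hookrightarrow\mathbb{R}$ gives $|(a,b)|\le|\mathbb{R}|$, build an explicit injection $\mathbb{R}\hookrightarrow(a,b)$ such as $x\mapsto a+\tfrac{b-a}{2}\big(1+\tfrac{x}{1+|x|}\big)$, and then apply the Cantor--Schr\"{o}der--Bernstein theorem; this trades the surjectivity check for a standard set-theoretic result.
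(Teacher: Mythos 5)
Your proof is correct and follows essentially the same route as the paper: an affine reduction to a model interval followed by an explicit continuous, strictly monotone bijection onto $\mathbb{R}$ (the paper uses $x/(1-x^2)$ on $(-1,1)$ where you use $(2x-1)/(x(1-x))$ on $(0,1)$). Your treatment is slightly more thorough in that you also cover the unbounded open intervals (half-lines and $\mathbb{R}$ itself), which the paper's proof silently omits, but this does not change the substance of the argument.
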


\begin{proof}
Note that the function $f(x)=\frac{x}{1-x^2}$ is a bijection from $(-1,1)$ to $\mathbb{R}$. Furthermore, there is a bijection from $(-1,1)$ to any finite open interval $(a,b)$, through the linear function $f(x)=-1+2\frac{x-a}{b-a}$. Therefore, there is a bijection from $(a,b)$ to $\mathbb{R}$, so the two sets have the same cardinality.
\end{proof}

Note that what we have just done is relating two infinities through exponentiation. We can further generalize these relations by introducing Beth numbers \cite{beth}, defined as follows:

\begin{definition}
Let $\beth_0=|\mathbb{N}|$, and $\beth_{\alpha+1}=2^{\beth_{\alpha}}$ for all $\alpha\in \mathbb{N}$.
\end{definition}

By this definition, we see that $\beth_1=2^{\beth_0}=2^{|\mathbb{N}|}=|\mathbb{R}|$. Of course, each Beth number is strictly greater than the one preceding it. 

The following theorem \cite{cantor_1} allows us to perform arithmetics on infinite cardinal numbers, and derive relationships between Beth numbers. 

\begin{theorem}
\label{max}
Given any two positive numbers, $\mu$ and $\kappa$, if at least one of them is infinite, then $\mu+\kappa=\mu\kappa=\max\{\mu,\kappa\}$.
\end{theorem}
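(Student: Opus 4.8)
The plan is to collapse both equalities onto the single fact that $\kappa\cdot\kappa=\kappa$ for every infinite cardinal $\kappa$, and then establish that fact by a well-ordering argument. First I would assume without loss of generality that $\mu\le\kappa$, so that $\kappa$ is the infinite one and $\max\{\mu,\kappa\}=\kappa$. Since $\mu,\kappa\ge 1$ and $\kappa\ge 2$ (being infinite), one has the two chains of inequalities
\begin{equation}
\kappa\ \le\ \mu+\kappa\ \le\ \kappa+\kappa\ \le\ \kappa\cdot\kappa,\qquad \kappa\ \le\ \mu\cdot\kappa\ \le\ \kappa\cdot\kappa ,
\end{equation}
where $\kappa+\kappa=2\cdot\kappa\le\kappa\cdot\kappa$. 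Hence, once $\kappa\cdot\kappa=\kappa$ is known, the Cantor--Schr\"oder--Bernstein theorem squeezes every term in both chains down to $\kappa$, which gives $\mu+\kappa=\mu\cdot\kappa=\kappa=\max\{\mu,\kappa\}$. This reduction is entirely elementary; the whole content of the theorem lies in the square identity.

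To prove $\kappa\cdot\kappa=\kappa$ I would identify $\kappa$ with the least ordinal of its cardinality and argue by transfinite induction: assume $\lambda\cdot\lambda=\lambda$ for every infinite cardinal $\lambda<\kappa$. Since $\kappa\le\kappa\cdot\kappa$ is trivial, it suffices to inject $\kappa\times\kappa$ into $\kappa$, i.e. to well-order $\kappa\times\kappa$ in order type $\le\kappa$. For this I would use the canonical ``maximum-first'' ordering: declare $(\alpha_1,\beta_1)\prec(\alpha_2,\beta_2)$ when $\max\{\alpha_1,\beta_1\}<\max\{\alpha_2,\beta_2\}$, or the two maxima coincide and $(\alpha_1,\beta_1)$ precedes $(\alpha_2,\beta_2)$ lexicographically. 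One checks this is a well-order, and that the $\prec$-initial segment below a pair with both coordinates $\le\gamma$ (for $\gamma<\kappa$) is contained in $(\gamma+1)\times(\gamma+1)$, whose cardinality is $|\gamma+1|\cdot|\gamma+1|<\kappa$ by the induction hypothesis (or trivially, when $\gamma$ is finite). Thus every proper initial segment of $(\kappa\times\kappa,\prec)$ has cardinality $<\kappa$, so the order type of $\prec$ is at most $\kappa$, producing the required injection.

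The main obstacle is precisely this square identity: it is the only step that is not formal, and it genuinely relies on the axiom of choice, entering through the fact that every cardinal can be taken to be an initial ordinal (an aleph). The inductive bookkeeping---verifying that the maximum-first relation is a genuine well-order, handling the finite base cases, and passing through limit stages---is routine but must be carried out with some care. Everything else (the squeeze, Cantor--Schr\"oder--Bernstein, the one-line inequalities) is immediate. Since all of this is standard, I would present the argument in the compressed form above, or simply cite a set-theory text such as~\cite{cantor_1} for the detailed verification.
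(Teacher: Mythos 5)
Your proof is correct: the reduction of both equalities to the square identity $\kappa\cdot\kappa=\kappa$ via the two squeeze chains, and the proof of that identity by transfinite induction using the maximum-first (G\"odel) well-ordering of $\kappa\times\kappa$, is the standard argument, and your remarks about Cantor--Schr\"oder--Bernstein and the role of the axiom of choice are accurate. The paper itself gives no proof of this theorem --- it simply cites the set-theory literature --- so your write-up supplies precisely the argument that citation points to.
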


Using this theorem, we can prove the following properties of Beth numbers:

\begin{corollary}
\label{trivial}
The addition or multiplication of two Beth numbers equals to the greater of the two, or $\beth_\beta\beth_\alpha=\beth_\beta+\beth_\alpha=\beth_\beta$ if $\alpha \leq\beta$ for all $\alpha, \beta\in\mathbb{N}$.
\end{corollary}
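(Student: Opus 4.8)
The plan is to reduce the statement directly to Theorem~\ref{max} together with the monotonicity of the Beth sequence. First I would observe that every Beth number $\beth_\alpha$ with $\alpha\in\mathbb{N}$ is an infinite cardinal: this holds for $\beth_0=|\mathbb{N}|$ by definition, and if $\beth_\alpha$ is infinite then so is $\beth_{\alpha+1}=2^{\beth_\alpha}\geq\beth_\alpha$, so the claim follows by induction on $\alpha$. In particular, for any $\alpha,\beta\in\mathbb{N}$ at least one (indeed both) of $\beth_\alpha,\beth_\beta$ is infinite, so Theorem~\ref{max} applies verbatim and gives
\begin{equation}
\beth_\beta+\beth_\alpha=\beth_\beta\beth_\alpha=\max\{\beth_\alpha,\beth_\beta\}.
\end{equation}

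Next I would establish that the Beth sequence is non-decreasing, i.e. $\alpha\leq\beta\implies\beth_\alpha\leq\beth_\beta$. By Cantor's Theorem, no set is equinumerous with its power set, and more precisely $\kappa<2^{\kappa}$ for every cardinal $\kappa$; hence $\beth_\alpha<2^{\beth_\alpha}=\beth_{\alpha+1}$ for each $\alpha$. Iterating this $\beta-\alpha$ times yields $\beth_\alpha\leq\beth_\beta$ whenever $\alpha\leq\beta$. Combining with the displayed identity, $\max\{\beth_\alpha,\beth_\beta\}=\beth_\beta$, which is exactly the assertion of the corollary.

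There is essentially no serious obstacle here: the statement is a bookkeeping consequence of the two results already in hand. The only point needing a (one-line) argument is the monotonicity of $\alpha\mapsto\beth_\alpha$, and even that is immediate once one has the strict inequality $\kappa<2^{\kappa}$ underlying Cantor's Theorem. If one wished to avoid invoking that strict inequality explicitly, an alternative is to note that any set of size $\beth_\alpha$ injects into its power set via singletons, which again gives $\beth_\alpha\leq\beth_{\alpha+1}$ and hence the same conclusion.
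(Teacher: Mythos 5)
Your proposal is correct and follows essentially the same route as the paper: the paper's proof is the one-line observation that the corollary ``comes directly from Theorem~\ref{max} and the fact that each Beth number is greater than its predecessors,'' and you have simply filled in the two implicit details (that every $\beth_\alpha$ is infinite so Theorem~\ref{max} applies, and that the Beth sequence is monotone so the maximum is $\beth_\beta$). No substantive difference in approach.
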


\begin{corollary}
\label{ari}
$\mu^{\beth_{\alpha}}=\beth_{\alpha+1}$ if $2\leq\mu\leq\beth_{\alpha+1}$. $(\beth_{\alpha})^{\kappa}=\beth_{\alpha}$ if $1\leq\kappa\leq\beth_{\alpha-1}$
\end{corollary}

\begin{proof}
Corollary \ref{trivial} comes directly from Theorem \ref{max} and the fact that each Beth number is greater than its predecessors.
\end{proof}

\begin{proof}
Corollary \ref{ari} can be proven as follows. By definition $2^{\beth_{\alpha}}=\beth_{\alpha+1}$. Furthermore, $(\beth_{\alpha+1})^{\beth_{\alpha}}=(2^{\beth_{\alpha}})^{\beth_{\alpha}}=2^{(\beth_{\alpha}\beth_{\alpha})}=2^{\beth_{\alpha}}=\beth_{\alpha+1}$, where $\beth_{\alpha}\beth_{\alpha}=\beth_{\alpha}$ from Corollary \ref{trivial}. Since $2\leq\mu\leq\beth_{\alpha+1}$, then $\beth_{\alpha+1}= 2^{\beth_{\alpha}}\leq\mu^{\beth_{\alpha}}\leq(\beth_{\alpha+1})^{\beth_{\alpha}}=\beth_{\alpha+1}$. This implies $\mu^{\beth_{\alpha}}=\beth_{\alpha+1}$. The second equality can be proven in the same vein.
\end{proof}

In the following section, we will define computing models using Cartesian products and mapping functions. The following two theorems will be helpful \cite{cantor_1}:

\begin{theorem}
Let the set $S$ be the Cartesian product of the sets $S_1$ and $S_2$, or $S=S_1\times S_2$. Then $|S|=|S_1||S_2|$.
\end{theorem}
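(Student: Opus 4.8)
The plan is to reduce the statement to the very definition of cardinal multiplication and then verify the one genuinely non-trivial point, namely that the construction does not depend on how we name the elements of $S_1$ and $S_2$. First I would dispose of the degenerate case: if either $S_1$ or $S_2$ is empty, then $S_1\times S_2=\varnothing$, and both sides of the claimed identity are $0$, so the equality holds. Henceforth assume both factors are nonempty.

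For the main case, recall that the product $\mu\kappa$ of two cardinals is defined to be $|A\times B|$ for any pair of sets with $|A|=\mu$ and $|B|=\kappa$; thus all that has to be checked is that this is well defined, i.e. that $|A\times B|$ depends only on $|A|$ and $|B|$. Given bijections $f\colon S_1\to T_1$ and $g\colon S_2\to T_2$, I would exhibit the map $h\colon S_1\times S_2\to T_1\times T_2$ given by $h(s_1,s_2)=(f(s_1),g(s_2))$ and check directly that it is a bijection, its inverse being $(t_1,t_2)\mapsto(f^{-1}(t_1),g^{-1}(t_2))$ applied componentwise. This shows $|S_1\times S_2|=|T_1\times T_2|$ whenever $|S_1|=|T_1|$ and $|S_2|=|T_2|$, which is precisely the well-definedness required, and then $|S|=|S_1\times S_2|=|S_1||S_2|$ holds by definition. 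Note that this argument invokes no choice principle, since the bijection $h$ is written down explicitly in terms of $f$ and $g$.

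For readers who prefer a more hands-on argument in the finite (or more generally well-orderable) setting, an alternative is to decompose $S_1\times S_2=\bigsqcup_{s_2\in S_2}\big(S_1\times\{s_2\}\big)$ as a disjoint union of $|S_2|$ pairwise disjoint ``slices'', each in bijection with $S_1$ via $s_1\mapsto(s_1,s_2)$; summing the cardinalities of the slices gives $|S_1\times S_2|=\sum_{s_2\in S_2}|S_1|=|S_1||S_2|$, where the last step is just the definition of a cardinal sum of constant summands. I do not anticipate any real obstacle here: the only point requiring care is to avoid circularity by being explicit about what ``cardinal product'' means, which is exactly why the substance of the proof is the well-definedness check rather than the displayed identity itself.
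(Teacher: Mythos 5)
Your proof is correct. Note that the paper itself offers no proof of this statement---it is quoted from the literature (the reference to a set-theory textbook) precisely because, as you observe, the identity $|S_1\times S_2|=|S_1||S_2|$ is essentially the definition of cardinal multiplication once well-definedness is checked. Your main argument (the componentwise bijection $h(s_1,s_2)=(f(s_1),g(s_2))$ establishing that $|A\times B|$ depends only on $|A|$ and $|B|$, with no appeal to choice) is the standard and complete way to do this; the only caveat is that your alternative ``slice'' decomposition skirts circularity, since the sum of $|S_2|$-many constant summands $|S_1|$ is itself usually \emph{defined} via the product $S_1\times S_2$---but you flag this yourself and confine that argument to the finite case, so nothing is broken.
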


\begin{theorem}
\label{map}
Let $f:S\rightarrow T$ be a function that maps set $S$ to set $T$, and let $F$ be the set of all possible functions $f$. Then $|F|=|T|^{|S|}$.
\end{theorem}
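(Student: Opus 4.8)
The plan is to reduce the statement to the definition of cardinal exponentiation by exhibiting a natural bijection. In the standard set-theoretic treatment one \emph{defines} $\kappa^{\lambda}$, for cardinals $\kappa,\lambda$, to be the cardinality of the set of all functions from a set of cardinality $\lambda$ into a set of cardinality $\kappa$; with that convention the theorem merely restates the definition, and the only thing left to verify is that this cardinality does not depend on which representative sets are chosen. So my first step would be to make this definition explicit, after which $F$ \emph{is} the witnessing set and the equality $|F|=|T|^{|S|}$ holds by fiat.

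To keep the argument self-contained with respect to the Cartesian-product theorem stated just above, I would instead argue through products. Identify each $f\in F$ with the indexed family $(f(s))_{s\in S}$, an element of the product of $|S|$ copies of $T$, i.e.\ $\prod_{s\in S}T$. The assignment $f\mapsto(f(s))_{s\in S}$ is injective (two maps agreeing at every point are equal) and surjective (any choice of a value in $T$ at each index defines a map $S\to T$), hence a bijection, so that $|F|=\bigl|\prod_{s\in S}T\bigr|$. When $S$ is finite with $|S|=k$, iterating the rule $|S_1\times S_2|=|S_1|\,|S_2|$ gives $|F|=|T|^{k}=|T|^{|S|}$; when $S$ is infinite, this product cardinality is precisely what the symbol $|T|^{|S|}$ abbreviates.

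The one point needing a little care is well-definedness. If $|S|=|S'|$ and $|T|=|T'|$ through bijections $\sigma\colon S\to S'$ and $\tau\colon T\to T'$, then $f\mapsto\tau\circ f\circ\sigma^{-1}$ is a bijection between the corresponding function sets, so $|T|^{|S|}$ is unambiguous; this transport-of-structure step involves no cardinal arithmetic and is routine.

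I do not anticipate a genuine obstacle: the claim is elementary set theory, and the earlier results of this section (Cantor's theorem, $2^{|\mathbb{N}|}=|\mathbb{R}|$, and Corollaries~\ref{trivial}--\ref{ari}) are themselves instances of this identification. The only subtlety worth flagging for the reader is that for infinite $S$ the equality $|F|=|T|^{|S|}$ is definitional, whereas for finite $S$ it is proved by induction from the product rule.
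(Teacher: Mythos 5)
Your proposal is correct. Note that the paper itself gives no proof of this theorem; it is simply quoted from the cited set-theory textbook, so there is nothing to compare against. Your argument is the standard one: the identity is essentially the definition of cardinal exponentiation, the bijection $f\mapsto(f(s))_{s\in S}$ onto $\prod_{s\in S}T$ reconciles it with the product rule, and the transport-of-structure map $f\mapsto\tau\circ f\circ\sigma^{-1}$ settles well-definedness. One small caveat: your closing remark that Cantor's theorem $|\mathbb{R}|=2^{|\mathbb{N}|}$ is ``an instance of this identification'' overstates things --- that equality requires an actual bijection argument (e.g.\ via binary expansions and Cantor--Schr\"oder--Bernstein), not just the definition of exponentiation --- but this aside does not affect the validity of your proof of the statement at hand.
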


Finally, we introduce the following theorem, which can be derived directly from the definition of cardinality \cite{cantor_1}:

\begin{theorem}
\label{card}
Two sets have the same cardinality if and only if there is a bijection between them. 
\end{theorem}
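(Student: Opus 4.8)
The plan is to reduce the claim to the way ``cardinality'' is set up, after which one implication is essentially a tautology and the other is the Cantor--Schr\"oder--Bernstein theorem, which the excerpt already permits us to cite from the referenced textbook. In the standard development one writes $|S|\leq|T|$ to mean that there is an injection $S\hookrightarrow T$, and declares $|S|=|T|$ to mean $|S|\leq|T|$ and $|T|\leq|S|$ both hold. With this convention the ``if'' direction is immediate: a bijection $\varphi\colon S\to T$ is in particular an injection $S\hookrightarrow T$, and $\varphi^{-1}$ is an injection $T\hookrightarrow S$, so $|S|=|T|$.

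For the ``only if'' direction I would assume $|S|=|T|$, i.e.\ that we are handed injections $f\colon S\to T$ and $g\colon T\to S$, and then build an explicit bijection by the usual chain-decomposition argument. One uses the alternating partial maps $f$, $g$, $f^{-1}$, $g^{-1}$ to partition $S\sqcup T$ into connected orbits, and within each orbit follows the sequence backwards; the orbit is then of exactly one of three types according to whether this backward trace terminates in an element of $S\setminus g(T)$, terminates in an element of $T\setminus f(S)$, or never terminates. Defining the map to be $f$ on the $S$-terminating and the doubly-infinite orbits and $g^{-1}$ on the $T$-terminating orbits, a short verification shows it is well defined, injective, and surjective, which furnishes the required bijection $S\to T$ and closes the equivalence.

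If instead cardinals are fixed as canonical representatives --- say $|S|$ is the least ordinal equinumerous with $S$ (via the well-ordering theorem), or $|S|$ is the equivalence class of $S$ under equinumerosity --- the statement becomes even more direct, as the paper's remark ``derived directly from the definition of cardinality'' suggests: $S$ is in bijection with $|S|$ and $T$ with $|T|$, so $|S|=|T|$ composes to a bijection $S\to T$, while conversely a bijection $S\to T$ forces $S$ and $T$ to be equinumerous with exactly the same ordinals (or to sit in the same equivalence class), hence $|S|=|T|$. Either way, the only genuine mathematical content is Cantor--Schr\"oder--Bernstein; the ``main obstacle'' is therefore just the bookkeeping in the chain decomposition, which I would dispatch by the case analysis above rather than by reproving that theorem here.
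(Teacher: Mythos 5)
Your proposal is correct, but it does more work than the paper does: the paper supplies no proof of this statement at all, merely citing the textbook and remarking that the theorem ``can be derived directly from the definition of cardinality.'' In other words, the paper is adopting the convention of your final paragraph --- cardinality is equinumerosity (or a canonical representative of the equinumerosity class), so the equivalence with the existence of a bijection is essentially definitional, and that is all the later applications (such as the corollary producing a bijection between $\mathbb{R}^n$ and $\mathbb{R}^m$) require. Your first two paragraphs instead take $|S|=|T|$ to mean that injections exist in both directions and then invoke the Cantor--Schr\"oder--Bernstein chain decomposition; that argument is correctly sketched (the three orbit types and the assignment of $f$ on the $S$-terminating and doubly-infinite orbits versus $g^{-1}$ on the $T$-terminating ones is the standard and sound bookkeeping), but it is only necessary under that alternative, order-theoretic definition of cardinal equality. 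So the mathematics is right on either foundation; the difference is that you prove a genuinely nontrivial theorem where the paper intends a near-tautology, which buys robustness to the choice of definition at the cost of importing Cantor--Schr\"oder--Bernstein that the paper never needs.
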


An interesting corollary of this theorem is the following:

\begin{corollary}
There is a bijection between $\mathbb{R}^n$ and $\mathbb{R}^m$, for any $n,m\in\mathbb{N}$.
\end{corollary}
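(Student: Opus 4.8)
The plan is to reduce the existence of a bijection to an equality of cardinalities via Theorem~\ref{card}, and then to settle that equality using the Beth-number arithmetic established in Corollary~\ref{ari}. Concretely, it suffices to show $|\mathbb{R}^n|=|\mathbb{R}^m|$; I will in fact show that each side equals $|\mathbb{R}|=\beth_1$, so that transitivity gives $|\mathbb{R}^n|=|\mathbb{R}^m|$, and Theorem~\ref{card} then converts this cardinality equality into the desired bijection.

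First I would write $\mathbb{R}^n$ as the $n$-fold Cartesian product $\mathbb{R}\times\cdots\times\mathbb{R}$ and apply the Cartesian-product cardinality theorem ($|S_1\times S_2|=|S_1||S_2|$) inductively to obtain $|\mathbb{R}^n|=|\mathbb{R}|^n$. Combining Cantor's Theorem with the definition of the Beth numbers gives $|\mathbb{R}|=2^{|\mathbb{N}|}=\beth_1$, and hence $|\mathbb{R}^n|=\beth_1^{\,n}$.

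Next I would invoke the second identity in Corollary~\ref{ari}, namely $(\beth_\alpha)^{\kappa}=\beth_\alpha$ whenever $1\leq\kappa\leq\beth_{\alpha-1}$. Taking $\alpha=1$ and $\kappa=n$, the hypothesis becomes $1\leq n\leq\beth_0=|\mathbb{N}|$, which holds for every positive natural number $n$. Therefore $\beth_1^{\,n}=\beth_1$, i.e. $|\mathbb{R}^n|=|\mathbb{R}|$; the identical computation with $m$ in place of $n$ yields $|\mathbb{R}^m|=|\mathbb{R}|$, so $|\mathbb{R}^n|=|\mathbb{R}^m|$, and Theorem~\ref{card} supplies the bijection.

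There is no genuinely difficult step; the only points needing a moment's care are (i) that the exponent must satisfy $n\leq\beth_0$ for Corollary~\ref{ari} to apply, which is automatic for any finite $n$, and (ii) the tacit convention that $\mathbb{N}$ here denotes the positive integers, so that $\mathbb{R}^n$ is nontrivial and the induction has a base case at $n=1$; if one admitted $n=0$ the statement would fail, since $\mathbb{R}^0$ is a single point.
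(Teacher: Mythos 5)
Your argument is exactly the paper's: both compute $|\mathbb{R}^n|=|\mathbb{R}|^n=(\beth_1)^n=\beth_1$ via Corollary~\ref{ari}, do the same for $m$, and then invoke Theorem~\ref{card} to convert the equality of cardinalities into a bijection. You simply spell out a few steps (the inductive use of the Cartesian-product theorem and the check that $n\leq\beth_0$) that the paper leaves implicit, so the proposal is correct and essentially identical in approach.
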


\begin{proof}
From Corollary \ref{ari}, we see that $|\mathbb{R}^n|=|\mathbb{R}|^n=(\beth_1)^n=\beth_1$. Similarly, $|\mathbb{R}^m|=\beth_1$. Therefore, the two sets have the same cardinality, so there is a bijection between the two. 
\end{proof}

Note that for complex coordinate spaces, $|\mathbb{C}^n|=|\mathbb{C}|^n=|\mathbb{R}^2|^n=|\mathbb{R}|^{2n}=(\beth_1)^{2n}=\beth_1$, so this bijection further extends to complex coordinate spaces.

\section{General Computing Model}\label{setcomp}

We have now introduced all the mathematical tools necessary to describe a computing machine under a set-theoretical framework. There are two sets that we are interested in: the set of all {\it internal states} that the machine supports, and the set of all realizable {\it transition functions}.

\subsection{Internal States}
\label{int}

Consider a general computing machine. We define a state variable $s$ to describe the full internal state of the machine, and we define the set $S$ as the collection of all possible states. The internal state should encompass all the necessary information of the machine at a given iteration. Formally, we can define it as follows:

\begin{definition}
The symbol $s$ is the internal state of the machine if and only if given $s$, the machine states of subsequent iterations are all fully determined for any algorithm.
\end{definition} 

For example, consider the full internal state of a Turing machine. The internal state should consist of three sub-states: the register state of the control ($s^{(1)}$), the tape symbols written on the cells ($s^{(2)}$), and the current address of the head ($s^{(3)}$) \cite{trans}. Given these three substates and some transition function (depending on how the machine is coded), then the processor will know what to write (thereby changing $s^{(2)}$), in what direction to move (thereby changing $s^{(3)}$), and what new register state to be in (thus changing $s^{(1)}$). Therefore, we see that the next state is uniquely determined, so the transition function is well-defined under this framework. Note that if we eliminate any of the three sub-states, then it will be impossible for us to uniquely determine the internal state of the subsequent iteration.

Under the set theoretical framework, we can express the set of internal states as the Cartesian product of the three sets of the respective sub-states, or $S=S^{(1)}\times\ S^{(2)} \times S^{(3)}$, then $|S|=|S^{(1)}||S^{(2)}||S^{(3)}|$. By calculating the cardinality of the set $S$, we essentially know the ``number" of internal states that a general Turing machine can support. 

Consider a Turing machine with $m$ tape symbols, $n$ tape cells, and $k$ register states. It is easy to see that $|S^{(1)}|=k$, $|S^{(2)}|=m^n$, and $|S^{(3)}|=n$. Therefore, we can calculate $|S|=|S^{(1)}||S^{(2)}||S^{(3)}|=km^nn<|\mathbb{N}|=\beth_0$. The strict inequality comes from the fact that $m$, $n$, and $k$ are all finite numbers. In other words, this is a finite digital/discrete machine, and in general, the inequality $|S|<\beth_0$ is true for a finite digital machine.

On the other hand, if we consider the theoretical model of a Turing machine with infinite tape cells $\mu=|\mathbb{N}|=\beth_0$, then the calculation changes significantly. In this case, we have $|S^{(1)}|=k$, $|S^{(2)}|=m^{\beth_0}=\beth_1$ (from Corollary \ref{ari}), and $|S^{(3)}|=\beth_0$. Therefore, we see that $|S|=|S^{(1)}|S^{(2)}||S^{(3)}|=k\beth_1\beth_0=\beth_1$ (from Corollary \ref{trivial}). For the purpose of proving Turing-completeness, we use this model of infinite tape.

\subsection{Transition Function}
\label{trans func}

\subsubsection{General Definition of Transition Function}

The operation of any computing machine is defined using a transition function \cite{trans} (or a set of transition functions such as in a general UMM \cite{umm}). In this paper, we only consider deterministic machines, or well-defined transition functions. Here, we give a much more general definition of the transition function. Essentially, we are throwing away constraints such as initial states, accepting states, tape symbols, and so forth, and simply define the transition function as a mapping from some state to some other (or the same) state. We can then formally define the transition function as follows:

\begin{definition}
\label{trans}
Let $S$ be any set, then $\varphi_S$ is said to be a \textnormal{transition function} on $S$ if, for every $s\in S$, we have a unique $\varphi_S(s)\in S$. In other words, $\varphi_S$ is a function that maps $S$ to a subset of itself, or $\varphi_S: S\mapsto S'$, where $S'\subseteq S$. We denote the set of all possible transition functions on $S$ as $\Phi_S$. 
\end{definition}

\begin{figure}
\includegraphics[scale=0.65]{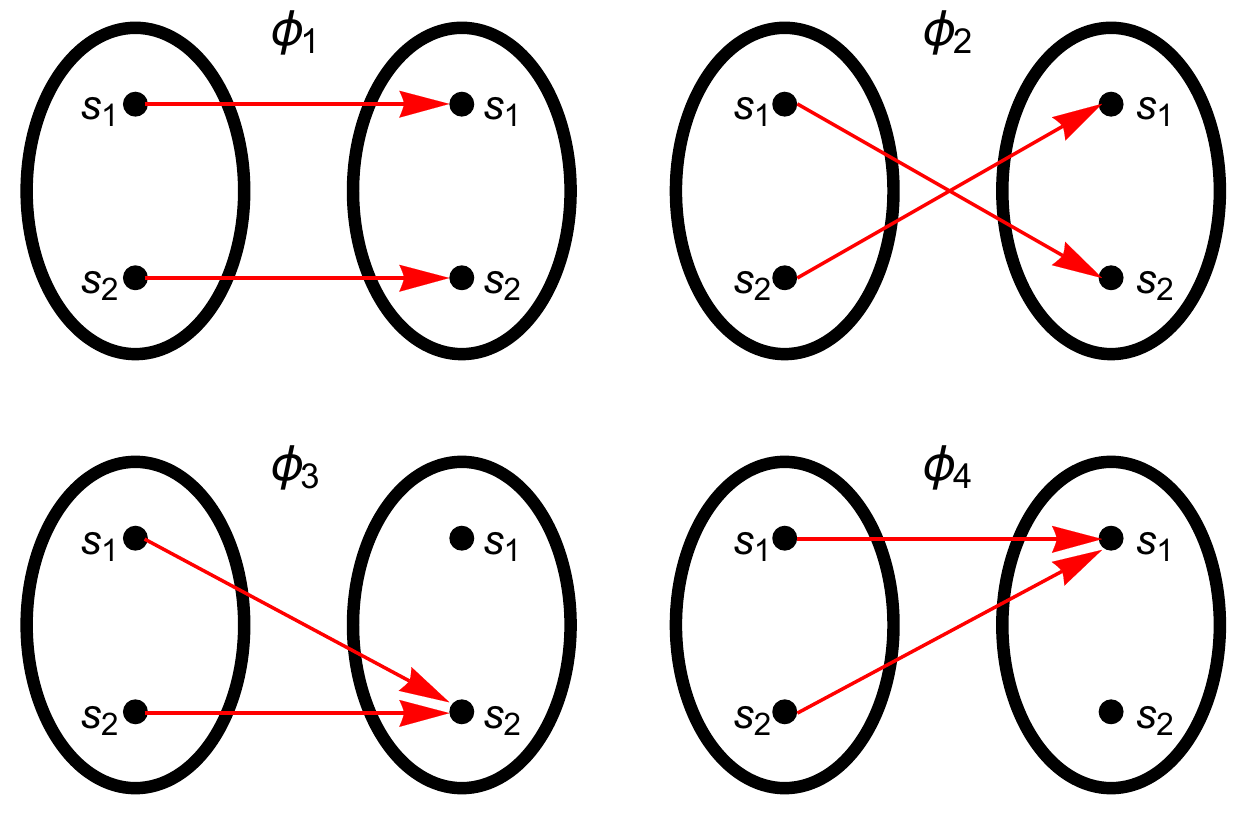}
\caption{The set of all transition functions for $|S|=2$. There are $2^2=4$ transition functions in total. A transition function on this set is described by two red arrows.}
\label{trans_draw}
\end{figure}

From Theorem \ref{map}, it is easy to see that the cardinality of $\Phi_S$ is simply $|\Phi_S|=|S|^{|S|}$. It is important to note that we are not defining the transition function based on \textit{the operation of the machine}. Instead, we are defining the transition function as the \textit{mapping of a set to itself} (see Fig.~\ref{trans_draw} for a schematic representation). As a result, there will be some transition functions that are not \textit{realizable} in the context of a particular computational model. The next subsection, section \ref{turing_example}, gives illustrative examples of transition functions that cannot be realized by a Turing machine.

\subsubsection{Turing Machine as Example}\label{turing_example}

For a Turing machine, after the machine is coded, the transition function remains stationary and cannot be changed during the execution of an algorithm (unlike a UMM where the transition function is dynamic). In other words, the machine will take some initial internal state $s_i$ and apply some transition function $\varphi_S$ to it recursively until the final state $s_f$ is reached and the machine halts. We can express this as 
\begin{equation}
s_f=\underbrace{\varphi_S(...\varphi_S(\varphi_S}_\text{n iterations}(s_i))).
\end{equation}
Furthermore, when the final state is reached, we should have $s_f=\varphi_S(s_f)$, meaning that the transition function should not alter the final state, and this represents the termination of the algorithm.

It is obvious that for any Turing machine algorithm, one can always find a transition function associated with it; however, the reverse is not true. In fact, there are transition functions in $\Phi_S$ that are not \textit{realizable} on a Turing machine. Some examples are writing symbols on two cells at once, writing a symbol on a cell not directly under the head, moving the head two to the right/left of the current one, and so on. However, note that these operations are not excluded \textit{a priori} from the full set of transition functions.

\subsection{Realizable Transition Functions}
\label{realizable}

For convenience, let us denote the set of all realizable transition functions acting on set $S$ as $\Phi'_S$. Then of course, $\Phi'_S\subseteq\Phi_S$ must be true. We call $\Phi'_S$ the \textit{realizable} set of transition functions, and $\Phi_S$ the \textit{full} set of transition functions. 

As a simple example, consider a machine with only two states, 0 and 1. Furthermore, the machine can only perform two functions at each iteration. The first function is to do nothing at all, and the second function is to switch between the two states. It is clear that the set of internal states can be expressed as $S=\{0,1\}$. This machine only supports two transition functions, which can be expressed as $\varphi_1(s)=s$ and $\varphi_2(s)=1-s$ for all $s\in S$. 

As a side note, we could have also written the second transition function as $\varphi_S(s)\equiv 1+s \Mod{2}$. However, since the transition function is defined by its mapping rather than its form, the 2 different expressions represent the same transition function. In this paper, we do not concern ourselves with the \textit{representation} of the transition functions.

Therefore, we see that the \textit{realizable} set of transition functions is simply a set with two elements, $\Phi'_S=\{\varphi_1, \varphi_2\}$. Note that there are two more transition functions possible, but they are not supported by the machine. They are $\varphi_3(s)=0$, and $\varphi_4(s)=1$. However, they are not excluded from the \textit{full} set of transition functions, $\Phi_S=\{\varphi_1,\varphi_2,\varphi_3,\varphi_4\}$.

Given $S$ and $\Phi'_S$, one can fully describe the machine \textit{structure}, so we can naturally make the following definition:

\begin{definition}
A machine $S\times\Phi'_S$ is a machine with the set of internal states $S$ and the realizable set of transition functions $\Phi'_S$
\end{definition} 

Here, the product notation is borrowed from \textit{group action} \cite{dummit}, though it is somewhat of an ``abuse of notation'' because $\Phi'_S$ is not necessarily a group. Nevertheless, this does not matter for what we are about to discuss next. 

\subsection{Isomorphism}
\label{prop}

\begin{figure}
\includegraphics[scale=0.65]{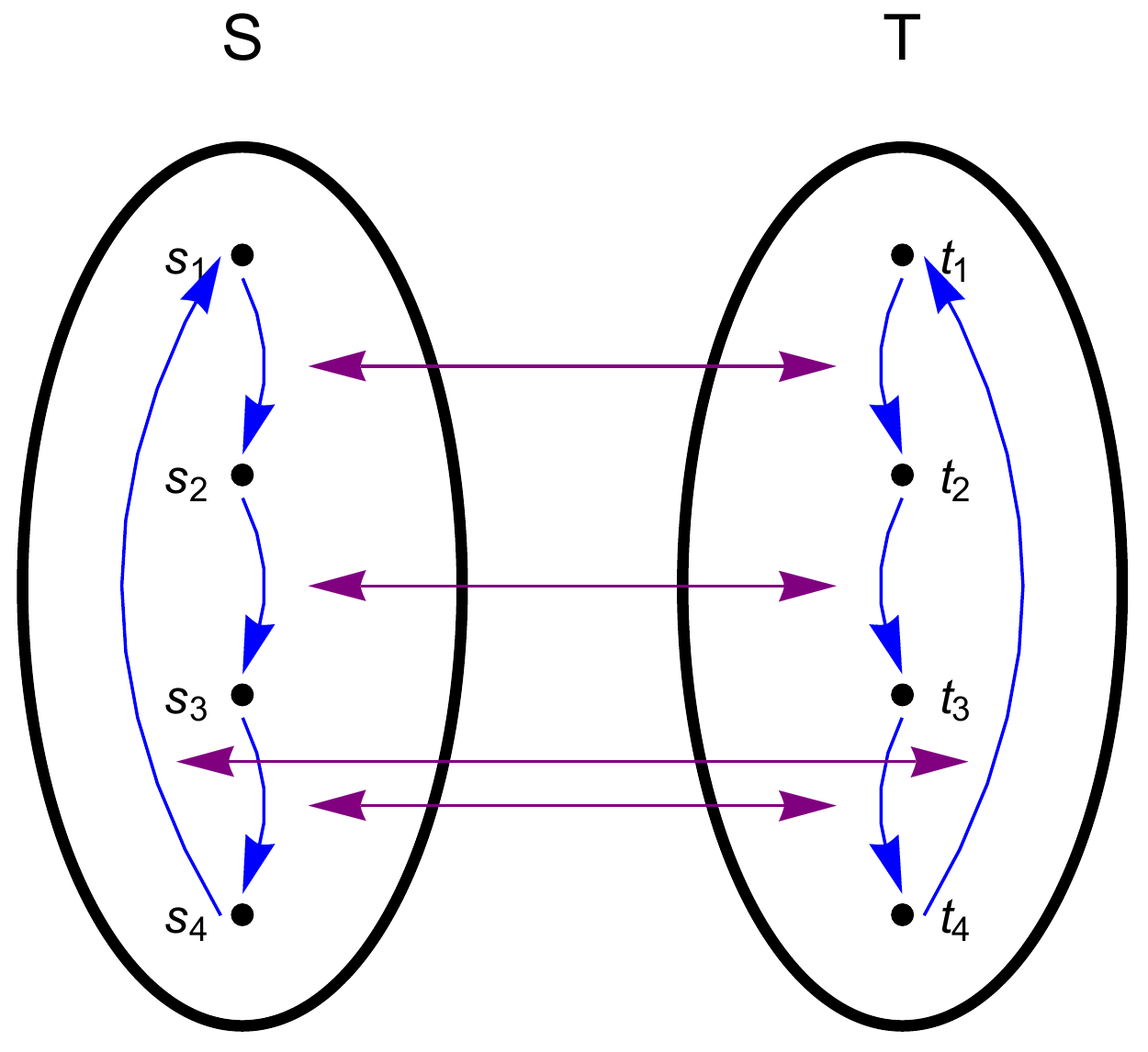}
\caption{Let $S$ and $T$ be two sets of the same cardinality, and label the set elements such that $s_i$ pairs with $t_i$. The collection of all blue arrows within a set represent a single transition function for each set. Informally, the pairing of the two transition functions is represented by the purple arrows that pair the arrows in set $S$ with their respective counterparts in set $T$.}
\label{bi_draw}
\end{figure}

Traditionally, the notion of ``equivalence" is defined as follows \cite{complete_1}:
\begin{definition}
Two machines, $A$ and $B$, are said to be equivalent if and only if $A$ can simulate $B$, and $B$ can simulate $A$.
\end{definition}

If machines $A$ and $B$ are equivalent, then machine $A$ can simulate the machine processes of machine $B$, and vice versa \cite{complete_1}. But recall that machine processes can be expressed in terms of transition functions acting on internal states. So informally, two machines are equivalent if and only if they have the same internal states and transition functions.

Again, borrowing from group theory the notion of \textit{isomorphism} \cite{dummit}, we can formalize what it means exactly for two sets to be the ``same":

\begin{definition}
Two machines, $S\times\Phi'_S$ and $T\times\Phi'_T$, are isomorphic if and only if the following holds:

1. $S$ and $T$ are of the same cardinality.

2. $\Phi'_S$ and $\Phi'_T$ are of the same cardinality.

3. There exist bijections $g:S\mapsto T$ and $h:\Phi'_S\mapsto\Phi'_T$ such that $g(\varphi_S(s))=h(\varphi_S)(g(s))$ for any $s\in S$ and $\varphi_S\in\Phi'_S$.
\end{definition}

At this point, we can use the terminology \textit{equivalence} and \textit{isomorphism} interchangeably. From the definition of \textit{isomorphism}, we can derive directly the following theorem about machines with \textit{full} sets of transition functions:

\begin{theorem}
\label{full biject}
If $|S|=|T|$, the machine $S\times\Phi_S$ is isomorphic to $T\times\Phi_T$, where $\Phi_S$ and $\Phi_T$ are full sets of transition functions acting on their respective sets.
\end{theorem}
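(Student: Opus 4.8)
The plan is to build all three pieces of the isomorphism from a single bijection $g$ between $S$ and $T$, transporting transition functions by conjugation. Since $|S|=|T|$, Theorem \ref{card} supplies a bijection $g:S\mapsto T$ with inverse $g^{-1}:T\mapsto S$; this settles condition 1 at once.

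For condition 3 I would define $h:\Phi_S\mapsto\Phi_T$ by $h(\varphi_S)=g\circ\varphi_S\circ g^{-1}$. First I would check that $h$ is well-defined: for any $\varphi_S\in\Phi_S$, the composite $g\circ\varphi_S\circ g^{-1}$ is a function from $T$ into $T$, hence a transition function on $T$ in the sense of Definition \ref{trans}, and because $\Phi_T$ is the \emph{full} set it contains every such function, so $h(\varphi_S)\in\Phi_T$. The intertwining identity is then a one-line computation: for any $s\in S$,
\[
h(\varphi_S)(g(s)) = (g\circ\varphi_S\circ g^{-1})(g(s)) = g\bigl(\varphi_S(s)\bigr),
\]
which is exactly the required relation $g(\varphi_S(s))=h(\varphi_S)(g(s))$.

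It remains to verify that $h$ is a bijection, which also delivers condition 2. I would exhibit the inverse explicitly: the map $\psi_T\mapsto g^{-1}\circ\psi_T\circ g$ carries $\Phi_T$ into $\Phi_S$ (again because $\Phi_S$ is full) and, using $g\circ g^{-1}=\mathrm{id}_T$ and $g^{-1}\circ g=\mathrm{id}_S$, composes with $h$ to the identity in both directions; hence $h$ is a bijection and $|\Phi_S|=|\Phi_T|$, so condition 2 holds. (One could instead get condition 2 immediately from Theorem \ref{map}: $|\Phi_S|=|S|^{|S|}=|T|^{|T|}=|\Phi_T|$; the conjugation map $h$ makes this concrete.)

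There is no real obstacle here; the only subtlety worth flagging is that the argument genuinely uses that $\Phi_S$ and $\Phi_T$ are the \emph{full} sets — conjugation by $g$ need not map a proper realizable subset $\Phi'_S$ onto a prescribed $\Phi'_T$ — which is precisely why Theorem \ref{full biject} is stated only for full sets and why the realizable case would demand a stronger hypothesis.
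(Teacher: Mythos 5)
Your proposal is correct and follows essentially the same route as the paper's proof: both take an arbitrary bijection $g:S\mapsto T$, transport transition functions by conjugation via $h(\varphi_S)=g\varphi_S g^{-1}$, and verify the intertwining identity by the same one-line computation. You merely spell out the bijectivity of $h$ (which the paper dismisses as obvious) by exhibiting the explicit inverse, which is a welcome but not substantively different addition.
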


\begin{proof} We show that the two machines satisfy the three conditions for isomorphism:

1. Since $|S|=|T|$, then $S$ and $T$ are of the same cardinality.

2. Since $|\Phi_S|=|S|^{|S|}=|T|^{|T|}=|\Phi_T|$, then $\Phi_S$ and $\Phi_T$ are of the same cardinality.

3. Let $g:S\mapsto T$ be any arbitrary bijection. Then we define $h(\varphi_S)=g \varphi_S g^{-1}$. It is obvious that $h$ is in fact a bijection under this definition. Furthermore, for any $s\in S$ and $\varphi_S\in\Phi'_S$, we have $h(\varphi_S)(g(s))=g\varphi_Sg^{-1}g(s)=g(\varphi_S(s))$.

\end{proof}

It is important to note that this theorem is only useful if the two machines under comparison both support their \textit{full} sets of transition functions. If two machines only support \textit{realizable} sets of transition functions, then they are not necessarily isomorphic even if they have the same ``size". For example, consider two machines both with the same internal states, $S=\{0,1\}$. Machine $A$ only supports the function $\varphi_A=s$, and machine $B$ only supports the function $\varphi_B=1-s$. Informally, machine $A$ is a machine that does nothing, and machine $B$ is a machine that keeps switching between the two states. It is clear that there is no appropriate mapping between the two machines. 

However, note that it is possible for two machines to be \textit{isomorphic} without supporting the \textit{full} set of transition functions. For example, consider again two machines both with the same internal states, $S=\{0,1\}$. Machine $A$ only supports the function $\varphi_A=0$, and machine $B$ only supports the function $\varphi_B=1$. At first site, the two machines may seem different. However, we can map the 0 state of machine $A$ to the 1 state of machine $B$, and the 1 state of machine $A$ to the 0 state of machine $B$. And if we denote this mapping as $f(s)=1-s$, then we can easily show that $\varphi_B(f(0))=f(\varphi_A(0))=1$ and $\varphi_B(f(1))=f(\varphi_A(1))=1$. Therefore, we see that the two machines are isomorphic. Informally, both machines work by bringing every state to some fixed state, so they \textit{behave} the same even though they have different sets of transition functions. 

\textit{Isomorphism} is an equivalence relation. So if machine $A$ is isomorphic to machine $B$, then we can denote this as $A\equiv B$. Then, the following properties clearly hold: 
\begin{enumerate}
\item $A\equiv A$.
\item if $A\equiv B$, then $B\equiv A$
\item if $A\equiv B$ and $B\equiv C$, then $A\equiv C$.
\end{enumerate}

\subsection{Sub-machines}

Traditionally, the notion of ``completeness" is defined as follows \cite{complete_1}:
\begin{definition}
Machine $A$ is $B$-\textnormal{complete} if and only if the former can simulate the latter.
\end{definition}

The difference of this definition from the definition of \textit{equivalence} is that the reverse need not be true. In other words, it is not required that $B$ can simulate $A$. In a sense, we can say that machine $A$ is a more ``general" computation model than machine $B$. 

Informally, if machine $A$ is $B$-complete, then the internal states and transition functions of machine $B$ are ``included" in those of machine $A$. To formalize the concept of ``inclusion", we introduce the notion of a \textit{sub-machine}. Then we can define \textit{sub-machine} in a way such that $A$ is $B$-complete if and only if we can find a sub-machine of $A$ such that it is isomorphic to $B$. 

The most obvious way to construct a \textit{sub-machine} from a given machine is through the \textit{reduction} of sets - the reduction of the internal state set, the reduction of the transition function set, and the reduction of both. First, let us look at the reduction of the transition function set:

\begin{definition}
Consider two machines, $S\times\Phi'_1$ and $S\times\Phi'_2$. We define the latter to be a \textnormal{functional reduction} of the former if and only if $\Phi'_2\subseteq\Phi'_1$.
\end{definition}

Note that the subscript $S$ of $\Phi_S$ is implied. Essentially, we are keeping $S$ the same while shrinking the set $\Phi_S$. As an example, consider again the machine with internal state set $S=\{0,1\}$ and transition functions $\varphi_1(s)=s$ and $\varphi_2(s)=1-s$. We can \textit{reduce} the transition function set by throwing away $\varphi_2$ and keeping only $\varphi_1$. Originally, the machine can perform two functions, doing nothing and switching. After the functional reduction, the machine loses its switching function, and can only perform the trivial function of doing nothing. 

There is also the reduction of the internal state set, which we can define as follows:

\begin{definition}
Consider two machines, $S\times\Phi'_S$ and $S'\times\Phi'_{S'}$. We define the latter to be a \textnormal{state reduction} of the former if and only if the following holds:

1. $S'\subseteq S$.

2. For any $\varphi'\in\Phi'_{S'}$, there exists a $\varphi\in\Phi'_S$ such that $\varphi'(s)=\varphi(s)$ for every $s\in S'$.

3. For any $\varphi\in\Phi'_S$  with image $\varphi(S')\subseteq S'$, there exists a $\varphi'\in\Phi'_{S'}$ such that $\varphi'(s)=\varphi(s)$ for every $s\in S'$.
\end{definition}

Note that after a state reduction, both the internal state set and the transition function set change. Furthermore, note that a state reduction is unique given some $S'\subseteq S$. The intuition behind this form of reduction is fairly straightforward. Informally, we simply throw away some elements from the internal state set, and after that, we throw away the functions that cannot be \textit{well-defined} without the state elements that we have thrown out. 

It is important to note that the state reduction of a machine with a full  set of transition functions is still a machine with a full set of transition functions, or compactly stated:

\begin{theorem}
\label{full_reduce}
The state reduction of $S\times\Phi_S$ is $S'\times\Phi_{S'}$, where $S'\subseteq S$ and $\Phi$ denotes the full set of transition functions. 
\end{theorem}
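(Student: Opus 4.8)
The plan is to prove the identity $\Phi'_{S'}=\Phi_{S'}$ by a double inclusion, where $\Phi'_{S'}$ denotes the transition‑function set of the (unique) state reduction of $S\times\Phi_S$ to $S'$, and $\Phi_{S'}$ is the full set of transition functions on $S'$ in the sense of Definition~\ref{trans}. The inclusion $\Phi'_{S'}\subseteq\Phi_{S'}$ is essentially definitional: $S'\times\Phi'_{S'}$ is by construction a machine on the state set $S'$, so every element of $\Phi'_{S'}$ is a transition function on $S'$, i.e.\ a map $S'\to S'$; but $\Phi_{S'}$ contains \emph{all} such maps, being the full set, so the inclusion holds with no further work. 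All of the content lies in the reverse inclusion $\Phi_{S'}\subseteq\Phi'_{S'}$.

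For the reverse inclusion I would take an arbitrary $\psi\in\Phi_{S'}$, i.e.\ an arbitrary map $\psi:S'\to S'$, and exhibit it as an element of $\Phi'_{S'}$ using condition~3 of the definition of state reduction together with $\Phi'_S=\Phi_S$. The key step is to extend $\psi$ to a transition function on all of $S$: define $\varphi:S\to S$ by $\varphi(s)=\psi(s)$ for $s\in S'$ and $\varphi(s)=s$ for $s\in S\setminus S'$. Since $\varphi$ is a map $S\to S$, it lies in the full set $\Phi_S$, and since $\varphi(S')=\psi(S')\subseteq S'$, condition~3 supplies some $\varphi'\in\Phi'_{S'}$ with $\varphi'(s)=\varphi(s)$ for every $s\in S'$. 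But $\varphi'$ and $\psi$ are then two maps on $S'$ that agree everywhere on $S'$, hence $\varphi'=\psi$, so $\psi\in\Phi'_{S'}$. Combining the two inclusions gives $\Phi'_{S'}=\Phi_{S'}$. (Alternatively, since the excerpt already observes that the state reduction is unique given $S'\subseteq S$, it suffices to verify that the pair $(S',\Phi_{S'})$ satisfies conditions~1--3 with $\Phi'_S=\Phi_S$: condition~1 is the hypothesis, condition~2 follows by the same extension trick, and condition~3 follows by restricting $\varphi$ to $S'$. Uniqueness then forces $\Phi'_{S'}=\Phi_{S'}$.)

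I do not expect a genuine obstacle: the statement is a bookkeeping consequence of the definitions, with no cardinality input needed. The one point that requires care is the forward inclusion — one must remember that calling $S'\times\Phi'_{S'}$ a \emph{machine} already forces the elements of $\Phi'_{S'}$ to be transition functions on $S'$ (image contained in $S'$), since conditions~2--3 of the state‑reduction definition by themselves only relate $\Phi'_{S'}$ to $\Phi_S$ by restriction/extension and do not state this. A minor additional subtlety is the degenerate case $S'=S$, where the reduction is trivially the original machine (and the extension step is vacuous), and the empty‑state case; neither affects the argument above.
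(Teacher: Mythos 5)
Your proof is correct and takes the same route the paper intends: the paper offers no argument beyond the remark that the theorem ``follows directly from the definition,'' and your double-inclusion verification (with the extension-by-identity trick to invoke condition~3 of the state-reduction definition) is exactly that direct check, spelled out. Nothing further is needed.
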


This follows directly from the definition of sub-machine. 

It is obvious that a machine can simulate a reduced version of itself. So at this point, we can appropriately define a \textit{sub-machine} as a series of state and functional reductions of the original machine. However, we can simplify this definition if we first introduce a few lemmas:

\begin{lemma}
\label{function}
Applying functional reductions twice is equivalent to applying one functional reduction.
\end{lemma}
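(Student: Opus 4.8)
The plan is to unfold the definition of functional reduction and reduce everything to transitivity of set inclusion. First I would take an arbitrary chain of two functional reductions: starting from $S\times\Phi'_1$, pass to $S\times\Phi'_2$ with $\Phi'_2\subseteq\Phi'_1$, and then pass to $S\times\Phi'_3$ with $\Phi'_3\subseteq\Phi'_2$. Since the internal-state set $S$ is left untouched at every step, the only data to track is the nested chain $\Phi'_3\subseteq\Phi'_2\subseteq\Phi'_1$, so the composite machine is again of the form $S\times(\cdot)$.

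Next I would invoke transitivity of $\subseteq$ to conclude $\Phi'_3\subseteq\Phi'_1$, which is precisely the condition for $S\times\Phi'_3$ to be a \emph{single} functional reduction of $S\times\Phi'_1$. This establishes that every double functional reduction is also a single functional reduction. For the converse, that a single functional reduction is an instance of a double one, I would simply interpolate the trivial reduction: given a functional reduction $S\times\Phi'_1\to S\times\Phi'_2$, write it as $S\times\Phi'_1\to S\times\Phi'_1$ (permissible since $\Phi'_1\subseteq\Phi'_1$) followed by $S\times\Phi'_1\to S\times\Phi'_2$. Hence the two notions coincide.

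I do not expect any genuine obstacle here; the proof is essentially a bookkeeping argument. The only points deserving a moment of care are verifying that the state set really is held fixed under functional reduction (so the composite cannot secretly have a smaller state set, which would take us outside the class of functional reductions), and confirming that the definition admits the degenerate reduction $\Phi'\subseteq\Phi'$ used in the converse direction. Both follow immediately from the definition of functional reduction given above.
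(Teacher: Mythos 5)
Your proof is correct and is exactly the ``fairly straightforward'' argument the paper itself declines to spell out: functional reduction fixes $S$ and only shrinks the transition-function set, so composing two reductions is transitivity of $\subseteq$, and the converse follows by interpolating the trivial reduction $\Phi'_1\subseteq\Phi'_1$. Nothing is missing.
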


\begin{lemma}
\label{state}
Applying state reductions twice is equivalent to applying one state reduction.
\end{lemma}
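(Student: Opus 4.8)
The plan is to unwind both sides of the claimed equivalence into explicit descriptions of the internal-state set and the realizable transition-function set, show that these descriptions coincide, and then conclude that the doubly reduced machine and the singly reduced machine are in fact the same machine — so, a fortiori, isomorphic, with the identity maps serving as the bijections $g$ and $h$ required by the definition of isomorphism. So the whole argument reduces to a careful set-theoretic computation of what the transition-function set looks like after one state reduction versus after two nested state reductions.

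First I would record, as a preliminary observation, the explicit form of a single state reduction. Writing the original machine as $S\times\Phi'_S$, conditions 1--3 in the definition of state reduction, together with the remark that a state reduction is unique given $S'\subseteq S$, pin down the reduced machine to $S'\times\Phi'_{S'}$ with $\Phi'_{S'}=\{\varphi|_{S'}:\varphi\in\Phi'_S,\ \varphi(S')\subseteq S'\}$. Here condition 3 supplies the inclusion ``$\supseteq$'', while condition 2 supplies ``$\subseteq$'' once one notes that any transition function on $S'$ automatically maps $S'$ into $S'$, so the extension guaranteed by condition 2 has image inside $S'$ on $S'$. This characterization is the workhorse of the proof, so I would state and verify it carefully before doing anything else.

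Next, given $S''\subseteq S'\subseteq S$, I would apply this characterization twice. The first reduction $S\to S'$ produces $\Phi'_{S'}$ as above; the second reduction $S'\to S''$ then produces $\{\psi|_{S''}:\psi\in\Phi'_{S'},\ \psi(S'')\subseteq S''\}$, which after substitution and using transitivity of restriction becomes the family $\{\varphi|_{S''}:\varphi\in\Phi'_S,\ \varphi(S')\subseteq S',\ \varphi(S'')\subseteq S''\}$. On the other hand the single reduction $S\to S''$ (legitimate since $S''\subseteq S'\subseteq S$) produces $\{\varphi|_{S''}:\varphi\in\Phi'_S,\ \varphi(S'')\subseteq S''\}$. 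Since the internal-state set is $S''$ on both sides, the lemma reduces to proving that these two families of functions on $S''$ are equal.

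The inclusion of the doubly reduced family into the singly reduced one is immediate, since dropping the intermediate constraint $\varphi(S')\subseteq S'$ can only enlarge the family. The hard part will be the reverse inclusion: given $\varphi\in\Phi'_S$ with $\varphi(S'')\subseteq S''$, one must realize $\varphi|_{S''}$ as the restriction of some realizable transition function that \emph{also} maps the intermediate set $S'$ into itself, whereas the witness $\varphi$ that is handed to us need not respect $S'$ at all. To close this gap I would invoke the structural facts about reductions established earlier — the uniqueness of the state reduction, and the fact from Theorem~\ref{full_reduce} that fullness is propagated through successive state reductions — to replace $\varphi$ by a compatible realizable transition function agreeing with it on $S''$ and behaving well on $S'$; this replacement step is where essentially all the content of the lemma lives, and it is the step I expect to be the genuine obstacle. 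Once the two families are shown to coincide, the doubly and singly reduced machines are identical, and I would finish by observing that the identity bijections on $S''$ and on the common transition-function set trivially satisfy the three conditions of the isomorphism definition, so applying state reductions twice is equivalent to applying one.
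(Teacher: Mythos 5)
Your setup is right and more careful than the paper's (the paper offers no argument at all for this lemma, dismissing it as ``fairly straightforward''): the characterization $\Phi'_{S'}=\{\varphi|_{S'}:\varphi\in\Phi'_S,\ \varphi(S')\subseteq S'\}$ does follow from conditions 2 and 3 of the definition together with the observation that any witness $\varphi$ for condition 2 automatically maps $S'$ into $S'$, and your reduction of the lemma to the equality of the two families $\{\varphi|_{S''}:\varphi(S')\subseteq S',\ \varphi(S'')\subseteq S''\}$ and $\{\varphi|_{S''}:\varphi(S'')\subseteq S''\}$ is exactly the right reformulation. But the step you flag as ``where essentially all the content of the lemma lives'' is not merely hard --- it fails. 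Take $S=\{0,1,2\}$, $S'=\{0,1\}$, $S''=\{0\}$, and $\Phi'_S=\{\varphi\}$ with $\varphi(0)=0$, $\varphi(1)=\varphi(2)=2$. Then $\varphi(S')=\{0,2\}\not\subseteq S'$, so the reduction to $S'$ yields the empty transition set and the subsequent reduction to $S''$ yields $\{0\}\times\emptyset$; whereas $\varphi(S'')=\{0\}\subseteq S''$, so the direct reduction to $S''$ yields $\{0\}\times\{\mathrm{id}\}$. The two families have different cardinalities, so the machines are not equal and not isomorphic. The repair tools you propose cannot close this: uniqueness of the state reduction is about the reduction map, not about the existence of a better witness $\tilde\varphi\in\Phi'_S$, and Theorem~\ref{full_reduce} applies only when $\Phi'_S$ is the \emph{full} set $\Phi_S$ (in which case the needed $\tilde\varphi$ always exists and your argument goes through), which is not the general situation the lemma addresses.

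So the honest conclusion is that the lemma, read as you (reasonably) read it, is false for general realizable sets, and only the inclusion ``double $\subseteq$ single'' survives. What can be salvaged --- and what the paper actually needs for the sub-machine definition --- is weaker: the result of two state reductions can still be expressed as one functional reduction of $S\times\Phi'_S$ followed by one state reduction (in the example, first discard $\varphi$ functionally, then reduce the states to $S''$), since the intermediate constraint $\varphi(S')\subseteq S'$ can be absorbed into a preliminary functional reduction. If you want a clean true statement, either restrict to full transition-function sets, or prove directly that a composition of state and functional reductions in any order equals a single functional reduction followed by a single state reduction; your explicit characterization of $\Phi'_{S'}$ is the right tool for that, but the literal claim ``two state reductions equal one state reduction'' should not be the target.
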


\begin{lemma}
\label{state_function}
A reduction is expressible in terms of a functional reduction followed by a state reduction if and only if it is expressible in terms of a state reduction followed by a functional reduction.
\end{lemma}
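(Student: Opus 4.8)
The plan is to regard a ``reduction'' as a two-step operation carrying a machine $S\times\Phi'_1$ to some machine $S'\times\Phi''$, and to lean on two facts already in place: (i) a functional reduction is constrained \emph{only} by $\Phi'_2\subseteq\Phi'_1$, so one has complete freedom in which functions to keep; and (ii) a state reduction to a chosen set $S'\subseteq S$ is \emph{unique}, with new function set necessarily equal to $\{\,\varphi|_{S'}:\varphi\in\Phi'_1,\ \varphi(S')\subseteq S'\,\}$ --- a well-defined object because, per the convention of this paper, a transition function is identified with its input--output map, so restriction to $S'$ makes sense. With these in hand I would prove the two implications separately, noting that in both the final state set is the same $S'$ (a functional reduction never changes the state set, and a state reduction followed by a functional reduction lands on whatever $S'$ the state reduction chose).

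First, suppose the reduction is realized as a functional reduction $\Phi'_1\to\Phi'_2$ followed by a state reduction to $S'$; by (ii) the outcome has function set $\Phi''=\{\,\varphi|_{S'}:\varphi\in\Phi'_2,\ \varphi(S')\subseteq S'\,\}$. I would then state-reduce $S\times\Phi'_1$ to $S'$ first, obtaining by (ii) the set $\Psi=\{\,\varphi|_{S'}:\varphi\in\Phi'_1,\ \varphi(S')\subseteq S'\,\}$; since $\Phi'_2\subseteq\Phi'_1$ we have $\Phi''\subseteq\Psi$, so discarding $\Psi\setminus\Phi''$ is a legitimate functional reduction and produces exactly $S'\times\Phi''$. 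That settles one direction.

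Conversely, suppose the reduction is a state reduction to $S'$ (giving $S'\times\Psi$ with $\Psi$ as above) followed by a functional reduction to some $\Psi'\subseteq\Psi$. For each $\psi\in\Psi'$ I would pick a lift $\varphi_\psi\in\Phi'_1$ with $\varphi_\psi(S')\subseteq S'$ and $\varphi_\psi|_{S'}=\psi$ --- such a $\varphi_\psi$ exists precisely because $\psi\in\Psi$ --- and set $\Phi'_2=\{\varphi_\psi:\psi\in\Psi'\}\subseteq\Phi'_1$, a legitimate functional reduction of $S\times\Phi'_1$. State-reducing $S\times\Phi'_2$ to $S'$ then yields, by (ii), function set $\{\,\varphi|_{S'}:\varphi\in\Phi'_2,\ \varphi(S')\subseteq S'\,\}=\{\varphi_\psi|_{S'}:\psi\in\Psi'\}=\Psi'$, since every member of $\Phi'_2$ already stabilizes $S'$. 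So the same reduction is realized as a functional reduction followed by a state reduction.

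I expect the ``if'' direction (re-expressing ``state then functional'' as ``functional then state'') to be the main obstacle: one has to lift each retained map on $S'$ to a genuine element of $\Phi'_1$ that stabilizes $S'$, and then verify that the ensuing state reduction collapses this lifted family back down to \emph{exactly} $\Psi'$ --- no fewer functions (every $\psi$ is recovered) and no extras (every element of $\Phi'_2$ restricts into $\Psi'$). The two earlier observations --- that transition functions are individuated by their maps, and that a state reduction is pinned down by its target state set --- are exactly what make this collapse argument go through without case analysis; the other direction is essentially immediate once the uniqueness formula for state reductions is available.
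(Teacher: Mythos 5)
Your proof is correct and follows essentially the same route as the paper's: one direction by observing that restricting a smaller function set to $S'$ lands inside the restriction of the larger one, and the converse by lifting each retained map on $S'$ back to a stabilizing element of the original function set (the paper phrases this as a restriction surjection $h$ and its ``multivalued inverse'' $h^{-1}$). If anything, your version is tighter --- you make explicit the uniqueness formula for a state reduction and verify that the lifted family collapses back to exactly $\Psi'$, points the paper's terser argument glosses over.
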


\begin{proof}
The first two lemmas are fairly straightforward to prove, so let us focus on the third lemma. Let $\Phi'_{S'}$ be a state reduction of $\Phi'_S$, then we can define a surjection $h:\Phi'_S\mapsto\Phi'_{S'}$ such that $(h(\varphi))(s)=\varphi(s)$ for every $s\in{S'}$ and every $\varphi\in\Phi'_S$. If we let $\Phi''_{S}\subseteq\Phi'_{S}$ be a functional reduction, then it is obvious that $h(\Phi''_{S})\subseteq\Phi'_{S'}$, meaning that $h(\Phi''_{S})$ is a functional reduction of $\Phi'_{S'}$. In other words, the state reduction of a functional reduction is also some functional reduction of a state reduction. The converse can be proven similarly by inverting the mapping so that we have some multivalued function $h^{-1}:\Phi'_{S'}\mapsto\Phi'_S$. In this case, if we let $\Phi''_{S'}\subseteq\Phi'_{S'}$, then we have $h^{-1}(\Phi''_{S'})\subseteq\Phi'_S$. Note that $h^{-1}$ is not really a function in the conventional sense, but this does not matter for the proof.  

\end{proof}

Note that Lemma \ref{function} implies that multiple functional reductions can be ``compressed" into one, and Lemma \ref{state} implies that multiple state reductions can also be ``compressed" into one. Furthermore, Lemma \ref{state_function} implies that the order in which we apply the functional and state reductions does not matter. With these in mind, we can finally make the following definition of \textit{sub-machine}:

\begin{definition}
Machine $B$ is a \textnormal{sub-machine} of $A$ if and only if $B$ can be expressed as a functional reduction followed by a state reduction of $A$.
\end{definition}

To see that this definition is appropriate, recall that a sub-machine can be constructed through a series of reductions. Since by Lemma \ref{state_function}, the order of functional and state reductions does not matter, we can choose to apply all the functional reductions first, and then apply all the state reductions. Furthermore, from Lemma \ref{function} and \ref{state}, we can condense all the functional reductions into one, and all the state reductions into one as well. Therefore, in the end, we are left with one functional reduction followed by one state reduction. 

An important thing to note is that \textit{sub-machine} is an order relation. So if machine $B$ is a sub-machine of machine $A$, then we can denote this as $B\preceq A$. The following properties then hold:
\begin{enumerate}
\item $A\preceq A$.
\item If $A\preceq B$ and $B\preceq A$, then $A\equiv B$.
\item If $B\preceq A$ and $C\preceq B$, then $C\preceq A$.
\end{enumerate}

At this point, we are in the position to formalize the relation of \textit{completeness} between two machines under our set-theoretic framework. We can define \textit{completeness} through the use of \textit{sub-machine}:

\begin{definition}
Machine $S\times\Phi'_S$ is $T\times\Phi'_T$-complete if we can find a sub-machine of $S\times\Phi'_S$ such that it is isomorphic to $T\times\Phi'_T$.
\end{definition}

Intuitively, what this means is that we are able to find a mapping between the two machines such that the machine processes of $T\times\Phi'_T$ are ``included" in the machine processes of $S\times\Phi'_S$. In other words, machine $S\times\Phi'_S$ can simulate $T\times\Phi'_T$. Under this definition, we can derive an important result about machines with full sets of transition functions:

\begin{theorem}
\label{master}
Machine $S\times\Phi_S$ is $T\times\Phi'_T$-complete if $\Phi_S$ is the full set of transition functions on $S$ and $|T|\leq |S|$.
\end{theorem}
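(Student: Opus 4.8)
The plan is to exhibit inside $S\times\Phi_S$ a sub-machine that is isomorphic to $T\times\Phi'_T$; by the definition of completeness this is all that is required. The construction is a composition of three moves: a state reduction to bring $S$ down to the right cardinality, an application of Theorem~\ref{full biject} to line the reduced machine up with $T\times\Phi_T$, and finally a functional reduction that carves out exactly $\Phi'_T$.

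First, since $|T|\le|S|$ there is an injection of $T$ into $S$; let $S'$ be its image, so $S'\subseteq S$ and $|S'|=|T|$. By Theorem~\ref{full_reduce}, the state reduction of $S\times\Phi_S$ onto $S'$ is $S'\times\Phi_{S'}$, again carrying the \emph{full} set of transition functions. Because $|S'|=|T|$, Theorem~\ref{full biject} provides an isomorphism between $S'\times\Phi_{S'}$ and $T\times\Phi_T$; I fix the witnessing bijections $g:S'\mapsto T$ and $h:\Phi_{S'}\mapsto\Phi_T$ with $h(\varphi)=g\varphi g^{-1}$, which satisfy $g(\varphi(s))=h(\varphi)(g(s))$ for every $s\in S'$ and $\varphi\in\Phi_{S'}$.

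Next, I set $\Psi:=h^{-1}(\Phi'_T)=\{\,g^{-1}\psi g:\psi\in\Phi'_T\,\}$. Since $\Phi_{S'}$ is the full set, every $g^{-1}\psi g:S'\to S'$ already lies in $\Phi_{S'}$, so $\Psi\subseteq\Phi_{S'}$ and $S'\times\Psi$ is a functional reduction of $S'\times\Phi_{S'}$. I then check that $S'\times\Psi$ is isomorphic to $T\times\Phi'_T$: condition~1 holds because $|S'|=|T|$; condition~2 holds because $h$ restricts to a bijection $\Psi\mapsto\Phi'_T$; and condition~3 holds because $g$ together with the restriction of $h$ already satisfies the compatibility identity on all of $\Phi_{S'}$, hence in particular on $\Psi$. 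Finally, $S'\times\Psi$ has been produced from $S\times\Phi_S$ by a state reduction followed by a functional reduction; by Lemma~\ref{state_function} the same object is expressible as a functional reduction followed by a state reduction, which is exactly the definition of a sub-machine of $S\times\Phi_S$. Therefore $S\times\Phi_S$ is $T\times\Phi'_T$-complete.

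The only delicate point is bookkeeping rather than substance: one must make sure the object constructed genuinely meets the definition of sub-machine, with the functional-then-state ordering, which is why Lemma~\ref{state_function} is invoked; and one must observe that the isomorphism identity of Theorem~\ref{full biject} survives the passage from $\Phi_{S'}$ to its subset $\Psi$, which it does trivially since restricting the class of functions to which an already-universal identity applies changes nothing. Everything else — existence of $S'$, the inclusion $\Psi\subseteq\Phi_{S'}$, and $h$ being a bijection — is immediate from the cardinality hypotheses and the theorems already established, so there is no real obstacle: the result is in essence a composition of Theorems~\ref{full biject} and~\ref{full_reduce} with Lemma~\ref{state_function}.
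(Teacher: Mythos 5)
Your proof is correct and follows essentially the same route as the paper's: choose $S'\subseteq S$ with $|S'|=|T|$, transport $\Phi'_T$ onto $S'$ by conjugation with a bijection, and observe that the resulting machine is a sub-machine (a state reduction composed with a functional reduction) of $S\times\Phi_S$ isomorphic to $T\times\Phi'_T$. The only cosmetic differences are that you reuse Theorem~\ref{full biject} to get the bijection $h$ rather than verifying injectivity by hand, and you explicitly invoke Lemma~\ref{state_function} to match the functional-then-state ordering in the definition of sub-machine, a step the paper leaves implicit.
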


\begin{proof}
Let $|S'|\subseteq |S|$ be some arbitrary subset of $S$ such that $|S'|=|T|$, then we can define some arbitrary bijection $g:T\mapsto S'$. Furthermore, we can define an injection $h:\Phi'_T\mapsto \Phi_{S'}$ such that $h(\varphi_T)=g\varphi_T g^{-1}$ for every $\varphi_T\in \Phi'_T$. To show that $h$ is in fact an injection, note that:
\begin{enumerate}
\item Given any $\varphi_T\in\Phi'_T$, we have $h(\varphi_T)s_1=g\varphi_Tg^{-1}s_1=g\varphi_Tt_1=gt_2=s_2\in S'$ for any $s_1\in S'$. Therefore, $h(\varphi_T):S'\mapsto S'$ is a transition function on $S'$, or $h(\varphi_T)\in\Phi_{S'}$.
\item Furthermore, $s_2=h(\varphi_T)s_1$ is uniquely determined for every $s_1\in S'$. So there is only one $h(\varphi_T)$ possible for any given $\varphi_T\in\Phi'_T$. Therefore, $h$ is a well-defined mapping.
\item Assume that $\varphi_T\neq\varphi_T'$, then there exists $t_1\in T$ such that $\varphi_T(t_1)=t_2\neq t_3=\varphi'_T(t_1)$. So $h(\varphi_T)s_1=s_2\neq s_3=h(\varphi'_T)s_1$, implying that $h(\varphi_T)\neq h(\varphi'_T)$. Therefore, $h$ is an one-to-one mapping.
\end{enumerate}
Now, if we let $\Phi'_{S'}\subseteq\Phi_{S'}$ be the set of all transition functions $\varphi_{S'}$ with a well-defined inverse $h^{-1}(\varphi_{S'})$, then it is easy to see that $h:\Phi'_T\mapsto\Phi'_{S'}$ is a bijection. Furthermore, we see that machine $T\times\Phi'_T$ is isomorphic to machine $S'\times\Phi'_{S'}$. But $S'\times\Phi'_{S'}$ is a functional reduction of $S'\times\Phi_{S'}$, which itself is a state reduction of $S\times\Phi_S$, so $S'\times\Phi'_{S'}$ is a sub-machine of $S\times\Phi_S$. Putting everything together, we see that $T\times\Phi'_T$ is isomorphic to a sub-machine of $S\times\Phi_S$, so $S\times\Phi_S$ is $T\times\Phi'_T$-complete.
\end{proof}

This theorem essentially states that if a machine supports the full set of transition functions (such as the UMM), then it can simulate any machine of a smaller size. This will be our basis for showing completeness relations for the UMM with respect to the other computation models.

\section{Revised Machine Definitions within Set Theory}\label{SET-rewrite}

Up to this point, we have avoided the discussion of the concept of ``output states" of a machine. However, under this new set-theoretic framework, this concept is easily expressible.

In general, the internal state of the machine must be ``decoded'' into an output state to be read by the user. We can denote the set of all possible output states as $F$. Then it is obvious that $|F|\leq |S|$, otherwise we would not be able to find a function that maps $S$ to $F$. In other words, every internal state must correspond to some output state, and it is easy to show that the output function is expressible as a transition function.

To show this, we first choose a subset $S_F\subseteq S$ such that $|S_F|=|F|$, then there is a bijection between $F$ and $S_F$. Therefore, we can simply describe the output mapping function as a transition function that maps $S$ to $S_F$. It is then clear that the set of all possible functions mapping internal states to outputs must be a subset of $\Phi_{S}$, so we can simply expand the \textit{realizable} set $\Phi'_S$ to include these functions, without the need to redefine a new set.

The mathematical framework has now been fully established, and we are ready to redefine the three machines we are considering in this work within this 
framework.

\subsection{Universal Memcomputing Machines}

In the original definition of the UMM, there is the complication of input and ouput pointers (see Eq.~(\ref{UMMdef})). Within the new set-theory framework, we can avoid 
the concept of pointers, since the transition function always reads the full internal state (all the cells) and writes the full internal state. In other words, we consider the combination of all cell states as a whole, and make no effort in describing which cell admits which state. In this case, intrinsic parallelism is obviously implied.

Let us then discuss the cardinality of the set of internal states. For a UMM, there is a finite number of cells, $n<\beth_0$, and each cell may admit a continuous state (with cardinality $\beth_1$). In this case, it is easy to show that $|S|=(\beth_1)^{n}=\beth_1$.

Furthermore, in the original definition of the UMM [Eq.~(\ref{UMMdef})], there are no constraints on the transition function $\delta$. This means that we can use the full set of transition functions, $\Phi_{S}$, to describe the machine. In this case, functional polymorphism is obviously implied. Therefore, we can define the UMM machine as $S\times\Phi_{S}$, with $|S|=\beth_1$. 

\subsection{Liquid-State Machine}

It is not hard to see that the internal state structures for the LSM and the UMM are similar. Instead of memory cells, the LSM has neurons. But if we make the conservative assumption that there are no constraints on the internal states of the LSM, then the cardinality of the set of internal states for this machine is the same as that of a UMM, $|S|=\beth_1$. However, what distinguishes the LSM from the UMM is that the set of transition functions for the LSM is not full. 

Recall that the LSM consists of a series of filters and an output function (see Sec.~\ref{definition}). The set of filters satisfies the point-wise separation property, and the function satisfies the fading-memory property. There is no need to express the two properties in the language of our new framework. Instead, it is enough to note that the point-wise property is a property of the transition function set, while the fading-memory property is a property of the transition function itself.

Therefore, we can find a subset $(\Phi_{S})_1\subseteq\Phi_{S}$ such that its elements represent the filters, with the subset itself satisfying the point-wise separation property. As discussed earlier, we can express the output function as a transition function, so we can find a subset $(\Phi_{S})_2\subseteq\Phi_{S}$ such that its elements represent the output functions, and they satisfy the fading-memory property. Then, we can take the union of the two subsets $\Phi'_{S}=(\Phi_{S})_1\cup (\Phi_{S})_2$ to get the realizable set of transition functions on $S$.

Therefore, we can describe the LSM as $S\times\Phi'_{S}$, with $|S|=\beth_1$. The specific structure of the machine can be defined by expressing the two properties as constraints on $\Phi'_{S}$. This is a slightly tedious process, so we will not be presenting it here, since it is irrelevant for our conclusions. 

\subsection{Quantum Computers}

Again, consider a quantum computer with $n$ identical qubits, each having $m$ basis states. In subsection \ref{qc}, we have shown that the total number of basis states for the entire system is $m^n<|\mathbb{N}|$. Each basis state is associated with some complex factor $c_{i_1 i_2...i_n}\in\mathbb{C}$. These factors are constrained by the normalization condition $\sum_{i_1=0}^{m-1}\sum_{i_2=0}^{m-1}...\sum_{i_n=0}^{m-1}|c_{i_1 i_2...i_n}|^2=1$. Furthermore, in practice we usually ignore an overall phase factor since it does not affect the expectation value of an observable.

At this point, it is clear that we can fully describe a quantum state as the Cartesian product of the complex factors for all basis states. In other words, we can represent an internal state as $c_{0_1 0_2 ... 0_n}\times c_{1_1 0_1 ... 0_n}...\times c_{(m-1)_1 (m-1)_2 ... (m-1)_n}$, where there are $m^n$ factors.

Given this information, we can calculate the cardinality of the full set of internal states to be $|S|=|\mathbb{C}^{m^n}|-|\mathbb{R}^2|=|\mathbb{C}^{m^n}|\leq|\mathbb{C}^{\mathbb{N}}|=|\mathbb{C}|^{|\mathbb{N}|}=(|\mathbb{R}|^{|\mathbb{N}|})^2=(\beth_1^{\beth_0})^2=(\beth_1)^2=\beth_1$. (The unimportant $|\mathbb{R}^2|$ is from the normalization condition and factoring out the overall phase factor.) In addition, $|S|=|\mathbb{C}^{m^n}|\geq|\mathbb{R}|=\beth_1$. Therefore, we have $|S|=\beth_1$.

The full set of internal states contains quantum states with varying degrees of entanglement. It is worth stressing though that, in practice, it is extremely hard to construct a quantum computer that can support the full set of quantum states. For example, it is very challenging to prepare 100 qubits that are fully entangled. (The current record is on the order of tens of fully-entangled qubits \cite{14_qubit}\cite{10_qubit}.) Therefore, the actual set $S'$ is a small subset of $S$, or $S'\subseteq S$, unless $m$ and $n$ are both very small. However, since we are making here only theoretical arguments, let us just assume that the full set $S$ of all possible entangled states can be supported.

As discussed previously, the transition functions of a quantum computer can be expressed as unitary operations on some initial state. The set of all unitary operations is obviously a strict subset of the full set of transition functions. For example, one cannot find an unitary operation that collapses every single state to $\ket{00...0}$ (setting $c_{0_1 0_2 ... 0_n}=1$, and setting all the other factors to 0), though this is excluded \textit{a priori} from $\Phi_S$. Therefore, we can describe the quantum computer as $S\times\Phi'_S$, where $\Phi'_S$ is the set of unitary evolutions governed by all possible Hamiltonians. Note that it is also possible for a quantum computer to have a dynamic transition function. This is the case if one performs \textit{quantum annealing} for finding the ground states \cite{anneal}. In this process, the Hamiltonian of the system is varied slowly, thus giving rise to a dynamic transition function.

A few words on the ``output function'' of a quantum computer are also in order. The output function essentially represents the operation of taking the expectation value of some observable on the internal state, or $\braket{\psi|\hat{O}|\psi}$. This maps the set of internal states to the set of output states $F\subseteq\mathbb{R}$ (expectation values have to be real), so we obviously have $|F|\leq|S|$. All things considered, a quantum computer can then be described as $S\times \Phi'_S$, where $|S|=\beth_1$, and $\Phi'_S$ is its realizable set of transition functions. 

\section{Universality of Memcomputing Machines}\label{universality}

From the above discussions, we can summarize all the results we have obtained so far, and express all the machines we considered here in their most general form:
\begin{itemize}
\item Turing Machine: $T\times\Phi'_{T}$, $|T|\leq\beth_1$,
\item Liquid-State Machine: $L\times\Phi'_{L}$, $|L|=\beth_1$,
\item Quantum Computer: $Q\times\Phi'_{Q}$, $|Q|=\beth_1$,
\item Universal Memcomputing Machine: $M\times\Phi_{M}$, $|M|=\beth_1$.
\end{itemize}

Therefore, by applying Theorem \ref{master}, we see that a UMM can simulate any Turing machine, any liquid-state machine, and any quantum computer. 

At this point, the goal of this paper is essentially accomplished. However, we can expand on this for each pair of machines separately. In particular, let us briefly discuss how a mapping between a UMM and the three other machines can be realized in theory. Of course, this mapping does not tell us anything about the resources required for a UMM to simulate these other machines. Hence, this is by no means a discussion on how to realize an efficient or practical mapping. 

\subsection{UMM vs. Turing Machines}

Let us look at the mapping between a Turing machine and a UMM. First, we map each tape cell to a memory cell (memcell). We can denote these memcells collectively as a ``memtape". The tape symbols can be mapped to the internal states of each memcell of the memtape. 

Then, we can map the state register to another memcell which we will denote as ``memregister". The state of the Turing machine is then stored as the internal state of the memregister. Finally, we can store the current address of the head as an internal state of yet another memcell which we denote as ``memaddress".

We can then wire the memcells together into a circuit such that it simulates the operation of the Turing machine. Note that as a result of functional polymorphism, we do not have to re-wire the circuit each time we choose to run a different algorithm. The circuit first reads the memregister and the memaddress, so that it knows which memcell of the memtape to modify and how to modify it. After that memcell is modified, the memregister and memaddress then update themselves to prepare for the next cycle. In short, we are replacing the tape, head, and control with memprocessors.

\subsection{UMM vs. LSM}

The mapping between a LSM and a UMM is fairly obvious. We simply have to map each ``reservoir cell'' to a memcell, and wire the circuit such that the point-wise separation and fading-memory properties are satisfied. The explicit construction of the circuit to realize such properties will not be explored here. 

Although, in theory, it is possible to simulate an LSM with a UMM, it is not always efficient or necessary to do so in practice. The circuit topologies of the two machines are very different, and they are designed to perform different tasks.

For the LSM model, the connections between the reservoir cells are typically random, and the reservoir as a whole is not trained. The expectation of getting the correct output relies entirely on training the output function correctly. In the end, the operation of the machine relies on statistical methods, and is inevitably prone to making errors. In some sense, the machine as a whole is analogous to a ``learning algorithm''~\cite{liquid_learn}.

On the other hand, for the UMM, we can connect the memcells into a circuit specific to the tasks at hand. One realization of this connection employs self-organizing logic gates~\cite{dmm2} to control the evolution of the machine such that it will always evolve towards an equilibrium state representing the solution to a particular problem (the machine is {\it deterministic}). 

In the general case, the UMM is an entirely different computing paradigm than the LSM, with the UMM being able to provide exponential speed-up for certain hard problems~\cite{dmm2}\cite{max-sat}. In other words, while possible, utilizing a UMM to simulate an LSM will not be exploiting all the properties of the UMM to its full use. In practical applications, it would then be more advantageous to use a UMM (and its digital version, a DMM) to tackle directly the same problems investigated by LSMs, without the intermediate step of simulating the LSM itself.

\subsection{UMM vs. Quantum Computers}

Simulating a quantum computer with a UMM requires ``compressing" the internal state of a quantum computer. Recall that a quantum computer has $m^n$ basis states, and each basis state is associated with some complex factor. The most obvious mapping is to map the $m^n$ basis states to $m^n$ mem-cells, and we can perform the simulation by wiring the mem-cells in a way such that the unitary evolution of interest is realizable. Of course, this would require an exponentially scaling number of mem-cells to achieve.

However, in this work, we have shown that any finite number of mem-cells can produce an internal state set with size $\beth_1$, which is also the size of any quantum computer. Therefore, in theory, we can map the basis states of the quantum computer to only $O(n)$ mem-cells, a linearly scaling number. However, in practice, this may come at the expense of exponentially deteriorating accuracy or exponentially scaling time.

This paper does not give a definitive answer as to whether it is possible for a UMM to simulate a quantum computer in polynomial time. Nevertheless, one of the features that makes UMMs a practical and powerful model of computation is precisely its ``information overhead''. Information overhead and quantum entanglement share some similarities: in some sense, both of them allow the machine to access a set of results of mathematical operations (without actually {\it storing} them) that is larger than that provided by simply the union of {\it non-interacting} processing units \cite{umm}\cite{shor}. We could then argue that we may exploit the information overhead property of a UMM to represent efficiently the entanglement of a quantum system. At this point, however, this question is still open. 

\section{Discussion}\label{discussions}

Within the set-theoretic framework of this paper, all the formulations are independent of the notion of resource. In other words, we showed that a UMM, in theory, can simulate any Turing machines, liquid-state machines, or quantum computers. However, performing such simulations with \textit{polynomial resources}, especially in the quantum case, cannot be proved within this framework. In fact, the question of whether one can perform the simulation of these machines using UMMs with polynomial resources is, of course, dependent on the specific algorithms that one is trying to simulate. 

Nevertheless, on a more practical note, the usefulness of UMMs, and in particular their digital realization (DMMs)~\cite{dmm2}, have already been shown to offer substantial advantages over standard algorithms, or even quantum computers, for a wide variety of hard combinatorial optimization problems. In fact, simulations of DMMs have already been applied to problems such as the maximum satisfiability (Max-SAT) \cite{max-sat,stress}, subset-sum \cite{leverage}, pre-training of neural networks \cite{haik}, and integer linear programming \cite{int-pro}, all of which carry great scientific and industrial value. In all these cases, DMM simulations have already outperformed traditional algorithmic approaches, in some cases providing an exponential speed up \cite{max-sat}. Therefore, if a physical DMM is realized through self-organizing logic gates and memristive elements \cite{dmm2}, the speed increase will be even more evident.

On the more theoretical end of the spectrum, this paper presents a simple but general set-theoretical methodology for checking equivalence or completeness relations between different computational models. This is useful in cases where other formal approaches are too cumbersome.

\section{Conclusions}\label{conclusions}

In conclusion, we have developed a set-theoretical approach to describe the relation between universal memcomputing machines and other types of computing models, in particular Turing machines, liquid-state machines, and quantum computers. Within this new mathematical framework, we have confirmed that UMMs are Turing-complete, a result already obtained in ~\cite{umm} using a different approach. 

In addition, we have also shown that UMMs are liquid-complete (or reservoir-complete) and quantum-complete, namely they can simulate {\it any} liquid-state (or reservoir-computing) machine and {\it any} quantum computer without reference to Turing machines. Of course, the results discussed here do not provide an answer to the question of what resources would be needed for a UMM to {\it efficiently} simulate such machines. Along these lines, it would be  interesting to study the relation between information overhead and quantum entanglement. If such a relation exists and can be exploited at a practical level, it may suggest how to utilize UMMs to efficiently simulate quantum problems that are currently believed to be only within reach of quantum computers (such as the efficient simulation of quantum Hamiltonians). Further work is however needed to address this practical question. 

\emph{Acknowledgments} -- MD acknowledges partial support from the Center for Memory and Recording Research at UCSD.

\bibliographystyle{ieeetr}
\bibliography{universality_of_umm}

\end{document}